\theoremstyle{plain}
\newtheorem{theorem}{Theorem}[section]
\newtheorem{lemma}[theorem]{Lemma}
\theoremstyle{definition}
\newtheorem{definition}[theorem]{Definition}
\theoremstyle{remark}
\newcommand{\R}{\mathbb{R}}
\icmltitlerunning{Customizing the Inductive Biases of Softmax Attention using Structured Matrices}
\begin{document}

\twocolumn[

\icmltitle{Customizing the Inductive Biases of Softmax Attention using \\ Structured Matrices}

\icmlsetsymbol{equal}{*}

\begin{icmlauthorlist}
\icmlauthor{Yilun Kuang}{nyu}
\icmlauthor{Noah Amsel}{nyu}
\icmlauthor{Sanae Lotfi}{nyu}
\icmlauthor{Shikai Qiu}{nyu}
\icmlauthor{Andres Potapczynski}{nyu}
\icmlauthor{Andrew Gordon Wilson}{nyu}

\end{icmlauthorlist}

\icmlaffiliation{nyu}{New York University}

\icmlcorrespondingauthor{Yilun Kuang}{yilun.kuang@nyu.edu}
\icmlcorrespondingauthor{Andrew Gordon Wilson\\}{andrewgw@cims.nyu.edu}

\icmlkeywords{Machine Learning, ICML}

\vskip 0.3in
]

\printAffiliationsAndNotice{}  

\begin{abstract}
The core component of attention is the scoring function, which transforms the inputs into low-dimensional queries and keys and takes the dot product of each pair. While the low-dimensional projection improves efficiency, it causes information loss for certain tasks that have intrinsically high-dimensional inputs. Additionally, attention uses the same scoring function for all input pairs, without imposing a distance-dependent compute bias for neighboring tokens in the sequence. In this work, we address these shortcomings by proposing new scoring functions based on computationally efficient structured matrices with high ranks, including Block Tensor-Train (BTT) and contiguous Multi-Level Low Rank (MLR) matrices. On in-context regression tasks with high-dimensional inputs, our proposed scoring functions outperform standard attention for any fixed compute budget. On language modeling, a task that exhibits locality patterns, our MLR-based attention method achieves improved scaling laws compared to both standard attention and variants of sliding window attention.
Additionally, we show that both BTT and MLR fall under a broader family of efficient structured matrices capable of encoding either full-rank or distance-dependent compute biases, thereby addressing significant shortcomings of standard attention. Finally, we show that MLR attention has promising results for long-range time-series forecasting.
\end{abstract}

\section{Introduction}

The attention mechanism \cite{bahdanau2016neuralmachinetranslationjointly,vaswani2017attentionneed} provides a flexible way to process sequences that has become the defining feature of transformers, and a crucial component of contemporary machine learning.
Because of its application to so many different architectures and domains, it is common to think of attention as a general purpose tool \cite{bommasani2022opportunitiesrisksfoundationmodels}.
However, attention endows models with a certain set of inductive biases that suit some tasks better than others \cite{lavie2024understandinginductivebiastransformers}. Moreover, attention is computationally expensive in both runtime and memory.
A large ongoing research effort aims to devise more efficient alternatives to standard attention, especially for long sequences and big models \cite{katharopoulos2020transformersrnnsfastautoregressive,beltagy2020longformerlongdocumenttransformer,gu2024mambalineartimesequencemodeling,dao2024transformersssmsgeneralizedmodels,yuan2025nativesparseattentionhardwarealigned}.
Architectures like linear attention based Transformers come with their own sets of inductive biases that can be too limiting for real data and often reduce accuracy \citep{arora2024simplelinearattentionlanguage, 10.5555/3692070.3692933}.

In this work, we show how structured matrices can be used to customize attention, changing its inductive biases to suit particular tasks and thereby improving efficiency. 

We focus on the attention scoring function, which determines how much each token attends to every other token by taking a dot product of their corresponding query and key vectors.
We identify two properties of the scoring function that are suboptimal in certain settings. 
First, it can suffer from a \textit{low-rank bottleneck}
\citep{amsel2024benefitsrankattentionlayers}. 
Since the head dimension, which defines the size of queries and keys, is significantly smaller than the embedding dimension, transforming inputs into queries and keys can cause information about each token to be lost. 
This bottleneck can limit the power of the attention layer. 
In fact, there are simple functions that are easy to express with attention if the head dimension is large enough, but are provably hard to express otherwise \citep{amsel2024benefitsrankattentionlayers}.

\begin{figure*}[t!]
    \centering
    \begin{subfigure}[t]{\linewidth}
        \centering
        \includegraphics[width=\linewidth]{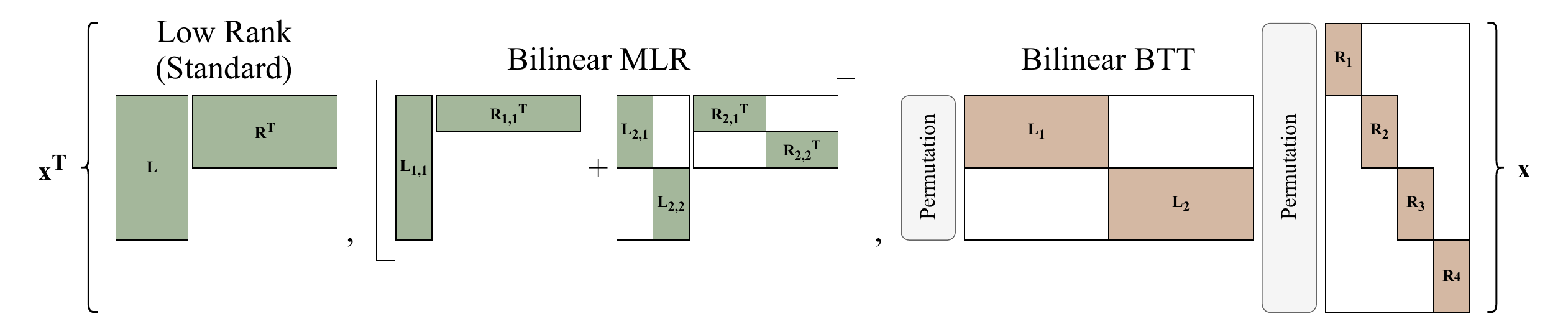}
        \caption{Bilinear Forms with Structured Matrices}
        \label{fig:fig1a}
    \end{subfigure}
    \begin{subfigure}[t]{\linewidth}
        \centering
        \includegraphics[width=0.95\linewidth]{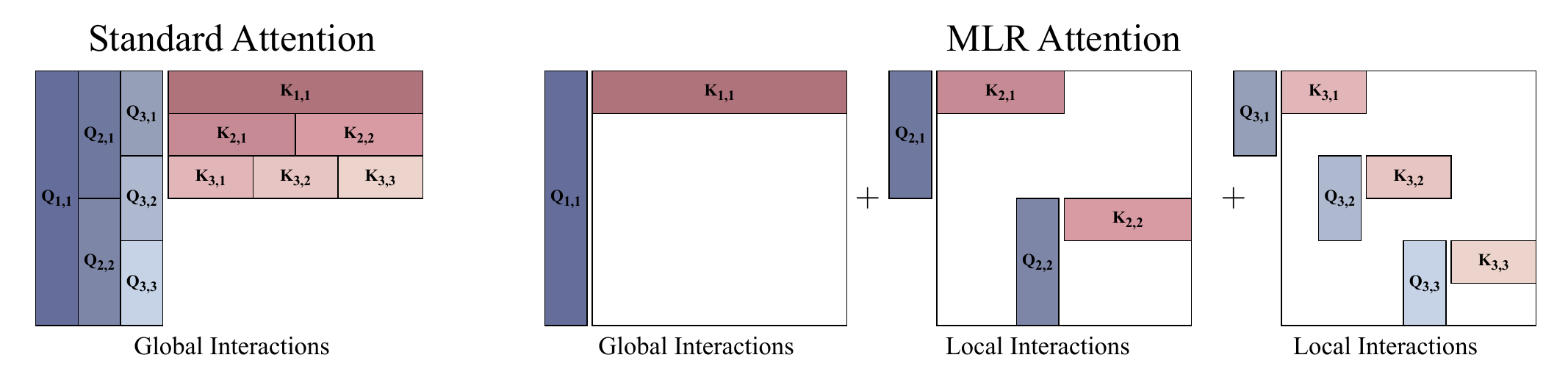}
        \caption{Encoding Distance-Dependent Compute Bias with Contiguous Multi-Level Low Rank (MLR) Attention}
        \label{fig:fig1b}
    \end{subfigure}
    \vspace{-5pt}
    \caption{\textbf{Overview of two ways to customize the inductive biases of softmax attention with structured matrices.} (a) Standard attention computes the dot product between a query and a key via a low-rank bilinear transformation. In this work, we replace the low-rank product with other structured matrices such as contiguous Multi-Level Low Rank (MLR) and Block Tensor Train (BTT) as introduced in \cref{sec:32bilinear}. (b) Standard attention captures pair-wise token interactions in the global context without imposing a distance-dependent compute bias. We introduce an inductive bias for more flexible interactions between nearby tokens by changing the attention score matrix from low rank to contiguous Multi-Level Low Rank (MLR), as in \cref{sec:33_mlr}.}
    \label{fig: example}
\end{figure*}

Second, the form of the scoring function lacks an inductive bias for encoding distance-dependent compute.
In standard attention, every pair of tokens is scored using the same function and the same weights, and each token can attend to the entire context, which is expensive for long contexts. 
Moreover, real-world datasets often exhibit some amount of locality, meaning that the semantics of a token depends most strongly on nearby tokens. Past efforts make attention more efficient by promoting locality while still preserving some global communication. 
For instance, the Longformer architecture combines sliding window attention with a few tokens of standard attention \citep{beltagy2020longformerlongdocumenttransformer}. However, such constructions are often brittle and gain efficiency at the expense of accuracy \citep{zhang2024the}.

To address these limitations, we introduce rich classes of structured matrices into the attention scoring function. Selecting an appropriate structure allows us to refine the inductive bias introduced by attention, tailoring it to specific tasks. Prior work has demonstrated the advantages of replacing dense linear layers with structured matrices \citep{dao2022monarchexpressivestructuredmatrices,qiu2024computebetterspentreplacing,potapczynski2024searchingefficientlinearlayers}, but these approaches treat each linear layer separately. Attention scores are built from a combination of multiple linear and bilinear transformations. By replacing all of these transformations with a single structured matrix, we expand the design space for more efficient and expressive attention mechanisms.

We explore two applications of this approach. To address the low-rank bottleneck, we incorporate Block Tensor Train (BTT) and contiguous Multi-Level Low Rank (MLR) matrices, structured families that have high rank while remaining efficient in terms of parameters and computation. Using this technique, we demonstrate improvements for in-context linear regression, a task where standard attention is constrained by the low-rank bottleneck. To better allocate compute and memory resources, we apply MLR matrices in a different way that prioritizes local interactions over long-range dependencies. We demonstrate this approach on language modeling compared to standard and sliding window attention, and for time series forecasting with long horizons.

We summarize our contributions as follows:
\begin{itemize}
\item We introduce a conceptual framework for analyzing and modifying the inductive biases of attention through the structure of its underlying linear and (bi-)linear transformations.
\item In \Cref{sec:32bilinear} and \Cref{sec:iclregressiondetails}, we apply this framework to eliminate the low-rank bottleneck of standard attention using high-rank BTT and MLR matrices, improving performance on an inherently high-dimensional task from the literature \citep{garg}.
\item In \Cref{sec:mlbtcdefsection}, we show that BTT and contiguous MLR matrices---including Monarch \citep{dao2022monarchexpressivestructuredmatrices}, Butterfly \citep{dao2020learningfastalgorithmslinear}, Kronecker, and low-rank matrices as special cases---can be united under a broader structured family which we call Multi-Level Block Tensor Contraction (MLBTC). \newpage
\item In \Cref{sec:33_mlr,sec:local_MLR}, we use MLR matrices to introduce a distance-dependent compute bias, with promising results on language modeling and time series forecasting.
\end{itemize}

This work advances our understanding of attention’s inductive biases, exploring its structural limitations and offering a principled approach to design more efficient and expressive architectures.

Our codes are available at the following github repository \href{https://github.com/YilunKuang/structured-attention}{https://github.com/YilunKuang/structured-attention}.

\section{Inductive Biases and Limitations of Standard Attention}

In the following section, we discuss how the built-in inductive biases of attention can be inappropriate in certain settings. In \cref{sec:sec21mha}, we introduce our notations and review standard multi-head attention \citep{vaswani2017attentionneed}. We identify the scoring function for sequence mixing as a bilinear transformation. In \cref{sec:sec22rankbottleneckinmha}, we show that the scoring function in multi-head attention relies on a low rank matrix, which creates information bottleneck for tasks with intrinsically high dimensional inputs. On in-context regression tasks, we show that multi-head attention requires a large enough head dimension to achieve good regression performance. In \cref{sec:sec23locality}, we point out that standard attention shares the same scoring function for all tokens within the context without imposing a distance-dependent compute bias. Real data often exhibits locality patterns, and thus attention fails to exploit the structure of data for improved efficiency.  

\subsection{The Attention Scoring Function}
\label{sec:sec21mha}
Let $H$ be the number of attention heads, $D$ be the embedding dimension, and $r$ be the head dimension, where $D=Hr$. The input to a multi-head self-attention layer is a $T \times D$ matrix $\mathbf X$, where $T$ is the sequence length. The output is a $T \times D$ matrix given by

\begin{equation}\label{eq:mhadefinition}
\sum_{i=1}^{H}\sigma\left(\mathbf{X}\mathbf{W}_{Q_i}\mathbf{W}_{K_i}^\top\mathbf{X}^\top\right)\mathbf{X}\left(\mathbf{W}_{V_i}\mathbf{W}_{O_i}^\top \right)
\end{equation}
where $\sigma$ is the row-wise softmax function and $\mathbf{W}_{Q_i}$,$\mathbf{W}_{K_i}$, $\mathbf{W}_{V_i}$,$\mathbf{W}_{O_i}\in\R^{D \times r}$ are weight matrices.
Attention is usually described with reference to queries, keys and values (given by $\mathbf X \mathbf{W}_{Q_i}$, $\mathbf X \mathbf{W}_{K_i}$, and $\mathbf X \mathbf{W}_{V_i}$) but in this paper we take a different perspective. Consider any head and define its attention scoring function $s : \R^D \times \R^D \to \R$ as follows:
\begin{align}\label{eq:scoringfunction}
s(\mathbf{x},\mathbf{x'})=\mathbf{x}^\top\mathbf{W}_{Q_i}\mathbf{W}_{K_i}^\top\mathbf{x'}
\end{align}
This is simply the bilinear form given by the matrix $\mathbf{W}_{Q_i}\mathbf{W}_{K_i}^\top$.
We define the score matrix $\mathbf{S} \in \R^{T \times T}$ by $\mathbf{S}_{j,j'} = s(\mathbf x_j, \mathbf x_{j'})$. Then the output of this head can be rewritten as follows:
\begin{align}
\sigma\left(\mathbf{S}\right)\mathbf{X}\left(\mathbf{W}_{V_i}\mathbf{W}_{O_i}^\top \right)
\end{align}
In this paper, we explore alternatives to the standard scoring function given in \cref{eq:scoringfunction}.
We now describe two of its drawbacks.

\subsection{Low-Rank Bottleneck} \label{sec:sec22rankbottleneckinmha}
Any $D \times D$ matrix defines a bilinear form, so in principle any $D \times D$ matrix could be used in \cref{eq:scoringfunction} to define an attention scoring function. Some earlier forms of attention used a single $D \times D$ weight matrix \citep{luong-etal-2015-effective}, but \citet{vaswani2017attentionneed} chose to use the rank-$r$ matrix $\mathbf{W}_{Q_i} \mathbf{W}_{K_i}^\top$ instead. Since $r \ll D$, this low-rank scoring function is more efficient than a full-rank one, but it also has weaker expressive power. For example, a natural scoring function for many tasks is $s(\mathbf x, \mathbf{x'}) = \mathbf x^\top \mathbf{x'}$, but a low-rank matrix is not capable of representing it. This phenomenon is called the low-rank bottleneck of multi-head attention.

Recent work has shown that the low-rank bottleneck can seriously weaken attention-based models in certain settings. \citet{amsel2024benefitsrankattentionlayers} demonstrated that for in-context linear regression, a task popularized by \citet{garg}, one full-rank attention head can significantly outperform low-rank multi-head attention after controlling for model size. They also proved that an attention layer cannot solve the nearest-neighbor problem for points on the $d_{\text{input}}$-dimensional sphere, even approximately, unless $r\gtrsim d_{\text{input}}$.
In both cases, the inputs are intrinsically high-dimensional and cannot be compressed without losing important information.
In such settings, the efficiency of low-rank attention comes at the price of accuracy.

\begin{figure}[h]
\begin{center}
\centerline{\includegraphics[width=\columnwidth]{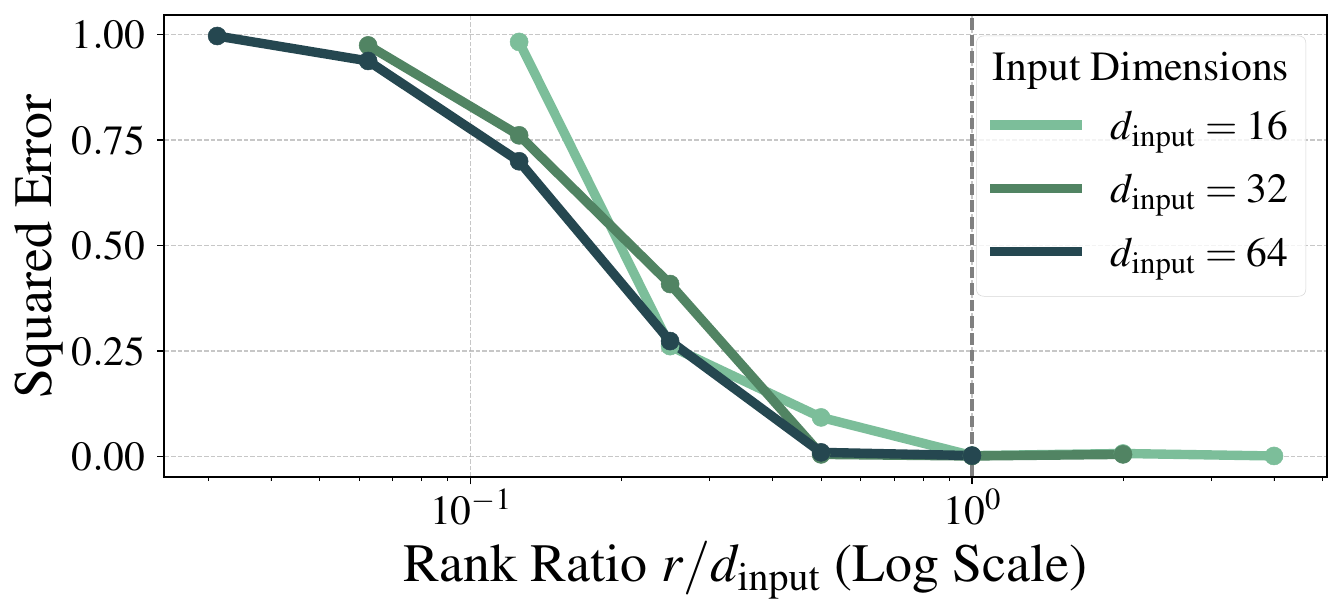}}
\caption{\textbf{Multi-head attention cannot solve in-context regression unless the head dimension} $r$ \textbf{is close to the input dimension} $d_{\text{input}}$. We train 6-layer transformers with $8$ heads and varying embedding and head dimensions to perform in-context regression: given the prompt $\mathbf{x}_1,\mathbf{w}^\top \mathbf{x}_{1}, \ldots, \mathbf{x}_{N-1}, \mathbf{w}^\top\mathbf{x}_{N-1}, \mathbf{x}_{N}$ for $\mathbf x_i \in \R^{d_{\text{input}}}$ and unknown $\mathbf{w}$, predict $\mathbf{w}^\top\mathbf{x}_{N}$. Plot shows error at $N=2 d_{\text{input}}$. For details see \cref{sec:iclregressiondetails}.
}
\label{fig:figformharankbottleneck}
\end{center}
\vskip -0.2in
\end{figure}

In \cref{fig:figformharankbottleneck}, we demonstrate this unfavorable trade-off for in-context regression, replicating the setting from \citet{garg}.
Our results show that, across a range of model sizes, transformers cannot solve this task unless the head dimension $r$ is nearly as large as the input dimension. Once this threshold is reached, transformers achieve high accuracy. In \cref{sec:32bilinear}, we explore alternatives to the low-rank scoring function of \cref{eq:scoringfunction} that are both full-rank and efficient.

\subsection{Distance-Dependent Compute Bias}\label{sec:sec23locality}
An important feature of attention-based models is that they allow any pair of tokens to exchange information.
A standard attention layer is completely agnostic to the order of its inputs.
In particular, the same scoring function $s(\cdot, \cdot)$ is used for every pair of tokens, regardless of where they appear in the sequence.

Many real-world tasks exhibit locality patterns, meaning that tokens appearing near each other in a sequence are more likely to be connected. In natural language, for instance, words appearing in the same sentence or paragraph must be understood together, while words appearing hundreds of pages apart usually do not influence each other directly. There are exceptions, such as when a particular fact must be recalled across a long context.
Attention does not naturally exploit the local patterns in the data, so it cannot take advantage of this simplifying structure.
For autoregressive models, the global nature of attention also requires the keys and values of all previous tokens to be saved in cache, which is expensive.

Due to these drawbacks, previous work has altered attention by imposing a sparsity pattern on the attention score matrix $\mathbf S$. For instance, sliding window attention (SWA) masks pairs of tokens that are more than some fixed distance apart \citep{child2019generatinglongsequencessparse}. In effect, this changes the score matrix to use different scoring functions for different pairs of tokens:
\begin{align*}
  s_{j, j'}(\mathbf{x}_j, \mathbf{x}_{j'})=\begin{cases}
    \mathbf{x}_j^\top \mathbf{W}_{Q}\mathbf{W}^\top_{K} \mathbf{x}_{j'} & \text{if $|j-j'|\leq T'$}.\\
    -\infty & \text{otherwise}.
  \end{cases}
\end{align*}
where $T'$ is the window size and we drop the attention head index $i$ for ease of notation. Some models use a strided version of SWA or give a global receptive field to a few tokens \citep{beltagy2020longformerlongdocumenttransformer}.
However, these sparse versions of attention are brittle and underperform standard attention \citep{arora2024simplelinearattentionlanguage}, so
they are commonly combined with some form of non-sparse attention \citep{team2024gemma,warner2024smarter,behrouz2024titanslearningmemorizetest}.
In \cref{sec:33_mlr}, we introduce a flexible and hierarchical distance-dependent compute bias by making $\mathbf{S}$ a structured (but not sparse) matrix.

\section{Customizing the Scoring Function with Structured Matrices}
In this section, we show how to modify the scoring function so as to change its inductive biases and alleviate the issues identified in the previous section. We first introduce our main mathematical tool, structured matrix families, in \cref{sec:31structmat}. We then show two ways to incorporate these structured matrices into the scoring function, addressing the issues identified in \cref{sec:sec22rankbottleneckinmha,sec:sec23locality}. In addition, in \cref{sec:mlbtcdefsection} we identify a generalization of the existing structured matrix families. Finally, we cover practical considerations regarding efficient implementations of structured matrices and techniques for stable feature learning in \cref{sec:practicalconsiderationpart}. 

\begin{table*}[t!]
\caption{\textbf{Properties of Structured Matrix Families}.
For all listed structures, the number of FLOPs needed to compute the bilinear form $\mathbf x^\top \mathbf M \mathbf y$ is $O(\text{\#params})$. For BTT, we assume $a=b=c=d=\sqrt{D}$.
Contiguous MLR and BTT both attain high rank relative to their parameter counts.
}
\vskip 0.15in
\begin{center}
\begin{tabular}{llllll}
\toprule
\textbf{Structure} & \textbf{Definition} & \textbf{Parameters} & \textbf{Rank} \\
\midrule
Dense & $\mathbf{W}$ & $D^2$ & $D$ \\
Low Rank & $\mathbf{L}\mathbf{R}^\top$ & $2Dr$ & $r$\\
Contiguous Multi-Level Low Rank (MLR) & $\sum_{l=1}^{L}\bigoplus_{k=1}^{p_l}\mathbf{L}_{l,k}\mathbf{R}_{l,k}^\top$ & $2D\sum_l r_l$ & $\sum_{l} r_l p_l$\\
Block Tensor Train (BTT) & $\mathbf{P}_{L}(\bigoplus_{k'=1}^{b}\mathbf{L}_{k'})\mathbf{P}_{R}(\bigoplus_{k=1}^{b}\mathbf{R}_{k}^\top)$ & $2D^{\frac32}s$ & $D$ \\
\bottomrule
\end{tabular}
\end{center}
\vskip -0.1in

\label{tab:tableflopsforstructmat}
\end{table*}

\subsection{Structured Matrix Families}\label{sec:31structmat}
Consider $D \times D$ matrices.
A structured matrix is one that can be represented using fewer than $D^2$ parameters. Structured matrices often admit fast algorithms for matrix-vector multiplication, making them both computationally- and parameter-efficient. For instance, \textbf{low rank} matrices have the form $\mathbf{L}\mathbf{R}^\top$, where $\mathbf{L},\mathbf{R}\in\mathbb{R}^{D\times r}$ and $r \ll D$.
They require only $2rD$ parameters to represent, and the corresponding bilinear form $(\mathbf x, \mathbf y) \mapsto \mathbf x^\top \mathbf L \mathbf R^\top \mathbf y$ can be computed with only $O(rD)$ FLOPs. We consider several other families of structured matrices in this paper. Key properties of each class are summarized in \cref{tab:tableflopsforstructmat}. We also present the extension to structured matrices with rectangular shapes in \cref{appendix:rectangularstructmat}.

\textbf{Block diagonal} matrices contain all zeros except for $p$ blocks along the diagonal, which are each dense.
We notate this as $\bigoplus_{k=1}^{p} \mathbf W_k$, where each $\mathbf W_k$ is a $\frac{D}{p} \times \frac{D}{p}$ diagonal block and $\bigoplus$ denotes the direct sum. In a \textbf{low-rank block diagonal} matrix, each of these diagonal blocks is itself low rank. They have the form $\bigoplus_{k=1}^{p} \mathbf L_k \mathbf R_k^\top$, where $\mathbf L_k$ and $\mathbf R_k\in \mathbb R^{\frac{D}{p} \times r}$. Intuitively, this family interpolates between low-rank matrices ($p=1$) and diagonal matrices ($p=D$).

\textbf{Contiguous Multi-Level Low Rank (MLR)} matrices are the unpermuted version of Multi-Level Low Rank matrices in the terminology of \citet{parshakova2023factorfittingrankallocation,parshakova2024fittingmultilevelfactormodels}. Throughout this paper, we use MLR as shorthand for contiguous MLR. Contiguous MLR matrices are summations of several low-rank block diagonal matrices with different block sizes.
Formally, an $L$-level MLR matrix is given by
\begin{align}\label{eq:mlr_def}
\sum_{l=1}^{L}\bigoplus_{k=1}^{p_l}\mathbf{L}_{l,k}\mathbf{R}_{l,k}^\top
\end{align}
where $\mathbf{L}_{l,k}$ and $\mathbf{R}_{l,k}$ are $\frac{D}{p_l} \times r_l$ low-rank factors.
Each level represents interactions at a particular scale.
MLR matrices with a range of block sizes inherit the abilities of both low-rank ($p_l=1$) \emph{and} block diagonal matrices ($p_l \gg 1$),
simultaneously capturing coarse-grained global interactions and fine-grained local interactions. Thus, they are a natural and efficient way to represent hierarchical structure.

\textbf{Block Tensor Train (BTT)} matrices were introduced in \citet{qiu2024computebetterspentreplacing} as a generalization of Monarch \citep{dao2022monarchexpressivestructuredmatrices} and Butterfly \citep{dao2020learningfastalgorithmslinear} matrices.
They are designed to be efficient, expressive, and full-rank. \citet{qiu2024computebetterspentreplacing} and \citet{potapczynski2024searchingefficientlinearlayers} found that they outperform other structured matrix families as replacements for linear layers in neural networks.
Given hyperparameters $a, b, c, d$ and $s$, where $ab = cd = D$, a BTT matrix has the form
\begin{align}\label{eq:BTT_def}
\mathbf{P}_{L}\left(\bigoplus_{k'=1}^{b}\mathbf{L}_{k'}\right)\mathbf{P}_{R}\left(\bigoplus_{k=1}^{c}\mathbf{R}_{k}^\top\right)
\end{align}
where $\mathbf{L}_{k'}\in\mathbb{R}^{a\times cs}$, $\mathbf{R}_k \in \R^{d\times bs}$ and $\mathbf{P}_L \in \R^{D \times D}, \mathbf{P}_R \in \R^{cbs \times cbs}$ are fixed permutation matrices. $\mathbf{P}_L$ permutes the rows by rearranging the dimension $b \cdot a$ into $a \cdot b$. This permutation is equivalent to reshaping a vector $\mathbf z \in \R^{ba}$ into a matrix $\mathbf Z \in \R^{b \times a}$, transposing it to $\mathbf Z^\top \in \R^{a \times b}$, and vectorizing it to get $\mathbf z' \in \R^{ab}$. Likewise, $\mathbf P_R$ reshapes a vector of dimension $c \cdot b \cdot s$ into a tensor with shape $(c, b, s)$, swaps the first two dimensions, then flattens it back into a vector of dimension $b \cdot c \cdot s$. \citet{qiu2024computebetterspentreplacing} proved that when $a=b=c=d=s=\sqrt{D}$, BTT can express any $D \times D$ matrix. For efficiency, we set $s = 1$ or $s = 2$.

\subsection{Resolving the Low-Rank Bottleneck with Structured Bilinear Forms}
\label{sec:32bilinear}

As described in \cref{sec:sec22rankbottleneckinmha}, parameterizing the attention scoring function by a low-rank matrix $\mathbf W_{Q} \mathbf W_{K}^\top$ creates a  information bottleneck.
However, using a dense $D \times D$ matrix is prohibitively expensive; evaluating the scoring function would require $O(D^2)$ operations per attention head.
Instead, we propose using a structured matrix that is both high-rank and allows efficient evaluation of the scoring function.
In particular, we use MLR and BTT matrices as visualized in \cref{fig:fig1a}:
\begin{align}
    s_{\text{MLR}}(\mathbf{x}_j, \mathbf{x}_{j'})&=\mathbf{x}_{j}^\top\bigg(\sum_{l=1}^{L}\bigoplus_{k=1}^{2^{l-1}}\mathbf{L}_{l,k}\mathbf{R}_{l,k}^\top\bigg)\mathbf{x}_{j'}\label{eq:bilinearmlrdef}\\
    s_{\text{BTT}}(\mathbf{x}_j, \mathbf{x}_{j'})&=\mathbf{x}_{j}^\top\bigg(\mathbf{P}_{L}\bigoplus_{k'=1}^{b}\mathbf{L}_{k'}\mathbf{P}_{R}\bigoplus_{k=1}^{c}\mathbf{R}_{k}^\top\bigg)\mathbf{x}_{j'}\label{eq:bilinearbttdef}
\end{align}
As shown in \cref{tab:tableflopsforstructmat}, a BTT matrix with $a=b=c=d=\sqrt{D}$ requires only $O(D^{3/2})$ parameters and FLOPs, but it is full rank.
For the MLR version, we set the numbers of blocks to be powers of two: $p_l = 2^{l-1}$. Setting $\sum_l r_l$ to be the ``head dimension'' $r$, we match the efficiency of standard attention and achieve high or even full rank.
In \cref{sec:iclregressiondetails}, we train transformers with these structured scoring functions on in-context regression.

It is also possible to interpret these scoring functions as constructing (higher-dimensional) queries and keys per attention head. See \cref{appendix:bilinearattnscoringfuncstructmat} for details. Thus our approach is also fully compatible with grouped-query attention (GQA) that shares the same KV transformations across multiple query heads \cite{ainslie2023gqatraininggeneralizedmultiquery}.

\subsection{MLBTC: Multi-Level Block Tensor Contraction}\label{sec:mlbtcdefsection}

In this section, we show that both contiguous MLR and BTT are special cases of a novel family of structured matrices which we call Multi-Level Block Tensor Contraction (MLBTC). MLBTC naturally encodes either the full rank constraints or the distance-dependent compute bias.

\begin{definition}
\label{def:mlbtcdefinition}
The Multi-Level Block Tensor Contraction is defined as 
\begin{align}\label{eq:mlbtc}
\text{MLBTC}(\mathbf{L}, \mathbf{R} )=\sum_{l=1}^{L}\alpha_l\mathbf{P}_{L}\bigoplus_{k'=1}^{p'_{l}}\mathbf{L}_{l,k'}\mathbf{P}_{R}\bigoplus_{k=1}^{p_l}\mathbf{R}_{l,k}^\top
\end{align}
where $\alpha_l\in\mathbb{R}$, $\mathbf P_L$ and $\mathbf P_R$ are fixed permutation matrices, $\mathbf{L}_{l,k'}\in\mathbb{R}^{m_{l,k'}\times r'_l}$,  $\mathbf{R}_{l,k}\in\mathbb{R}^{n_{l,k}\times r_l}$, $\sum_{k'=1}^{p'_{l}}m_{l,k'}=m$, $\sum_{k=1}^{p_l}n_{l,k}=n$, and $r'_l p'_l = r_l p_l$. For square matrices, we assume $m=n=D$. 
\end{definition}

To see that MLBTC captures all contiguous MLR matrices (including low-rank matrices as a special case), set $\alpha_l = 1$, $\mathbf{P}_L = \mathbf{P}_R = \mathbf{I}$, $k'=k$, $p'_l = p_{l}$, and $r'_l = r_{l}$ in \cref{eq:mlbtc}.
To see that it captures all BTT matrices (including Monarch, Butterfly, and Kronecker), set $\alpha_l = 0$ for all but one $l$ and select the dimensions to match \cref{eq:BTT_def}: $p'_{l}=b$, $p_{l}=c$, $r'_{l}=cs$, $r_{l}=bs$, $n_{l,k}=d$, and $m_{l,k'}=a$. 

Since BTT generalizes Monarch \cite{dao2020learningfastalgorithmslinear}, Butterfly \cite{dao2020learningfastalgorithmslinear}, and Kronecker matrices, while contiguous MLR generalizes low-rank matrices, MLBTC naturally contains these matrices as special cases. Now consider square matrices such that $m=n=D$. Since BTT can approximate arbitrary dense matrices of shape $D\times D$ based on \citet{qiu2024computebetterspentreplacing}, MLBTC can also express any $D\times D$ matrices with large enough $r_l$ and $r'_l$. Like MLR and BTT, MLBTC matrices can be used to define an efficient full-rank scoring function. We leave this generalization to future work.

\subsection{Encoding Distance-Dependent Compute Bias with MLR Attention}
\label{sec:33_mlr}
As described in \cref{sec:sec23locality}, standard attention lacks a distance-dependent compute bias, and previous work has sought to create one by imposing a sparse structure onto the score matrix $\mathbf S$.
We impose a MLR structure instead.
MLR matrices inherit the advantages of both low rank and block diagonal matrices. Likewise, our construction inherits the advantages of both standard attention (global receptive field) and sliding window attention (distance-dependent compute bias) by organizing the score matrix into hierarchically nested levels.

We first describe our construction in terms of the scoring function.
Divide the sequence into blocks. Define $d(j, j')$ to be $2$ when tokens $j$ and $j'$ belong to the same block and $1$ otherwise. Then we can define a two-level low-rank scoring function as follows:
\begin{equation*}
s_{j, j'}(\mathbf x_j, \mathbf x_{j'}) = \begin{cases}\mathbf x_j^\top (\mathbf L_1 \mathbf R_1^\top)\mathbf x_{j'} & d(j, j') = 1 \\
\mathbf x_j^\top (\mathbf L_1 \mathbf R_1^\top + \mathbf L_2 \mathbf R_2^\top)\mathbf x_{j'} & d(j, j') = 2
\end{cases}
\end{equation*}
where $\mathbf L_1, \mathbf R_1 \in \R^{D \times r_1}$ and $\mathbf L_2, \mathbf R_2 \in \R^{D \times r_2}$ are low rank factors.
The term $\mathbf L_2 \mathbf R_2^\top$ increases the rank and power of the scoring function, but only for pairs of tokens in the same block.
To generalize this construction, we can further divide each block into subblocks. For a pair of tokens in the same subblock, let $d(j, j') = 3$. Define
\begin{equation}\label{eq:hierarhical_score}
s_{j, j'}(\mathbf x_j, \mathbf x_{j'}) = \mathbf x_j^\top \left(\sum_{l=1}^{d(j, j')} \mathbf L_l \mathbf R_l^\top\right)\mathbf x_{j'}
\end{equation}
This definition extends to any number of subdivisions. The expression in \cref{eq:hierarhical_score} is also equivalent to the matrix form presented in \cref{eq:mlr_def}. See \cref{appendix:mlrattenderivation}. 

We can compute the corresponding score matrix efficiently using batched matrix multiplications.
As in standard attention, we form query and key matrices $\mathbf Q = \mathbf X \mathbf W_Q$ and $\mathbf K = \mathbf X \mathbf W_K$.
We divide $\mathbf Q$ into blocks to obtain the left MLR factors and divide $\mathbf K$ to obtain the right ones as shown in \cref{fig:fig1b}. Finally, we combine the factors $\mathbf Q_{l,k}$ and $\mathbf K_{l,k}$ as in the definition of MLR matrices (\cref{eq:mlr_def}):
$
\mathbf S = \sum_{l=1}^{L}\bigoplus_{k=1}^{p_l}\mathbf{Q}_{l,k}\mathbf{K}_{l,k}^\top
$
This is equivalent to the score function defined in \cref{eq:hierarhical_score} when $\mathbf W_Q = \begin{bmatrix}\mathbf L_1 \vert & \cdots & \vert \mathbf L_L\end{bmatrix}$ and $\mathbf W_K = \begin{bmatrix}\mathbf R_1 \vert & \cdots & \vert \mathbf R_L\end{bmatrix}$.

In our construction, the number of levels $L$, the number of blocks in each level $p_1, \ldots, p_L$, and the rank of each level $r_1, \ldots, r_L$ are hyperparameters chosen such that the sequence length is always longer than $\max_l p_l$ and the sum of the ranks equals the head dimension: $\sum_l r_l = r$.
In this paper, we use $L \leq 8$, and we set $p_l = 2^{l - 1}$ to create hierarchically nesting levels.\footnote{It is also possible to select the size of each block in a level dynamically according to the input. For instance, one level could divide the tokens into blocks corresponding to paragraphs, while another could divide it into documents.}
We notate the rank allocation as follows: $r_1 \vert r_2 \vert \cdots \vert r_L$.
When $L=1$, MLR attention reduces to standard attention.
When $L>1$, more FLOPs are allocated to computing each local interaction than each global one, which saves compute overall.
Standard attention needs $T^2 r$ FLOPs to form $\mathbf S$, but MLR attention uses only $T^2 \sum_{l=1}^L \frac{r_l}{2^{l-1}}$.

Another benefit of MLR attention is that it reduces the key cache size during auto-regressive generation. Since $\mathbf K_{2,1}$ only pertains to pairs of tokens in the first half of the sequence, the final token will not attend to it; it only needs $\mathbf K_{2,2}$ for level 2 attention. For level $l$, we must only retain $\mathbf K_{l, p_l}$.
Thus, the total key cache size is $\sum_{l=1}^{L} \mathrm{size}(\mathbf K_{l, p_l}) = T \sum_{l=1}^{L} \frac{r_l}{2^{l-1}}$, which is smaller than the $\mathrm{size}(\mathbf K) = O(Tr)$ cache size needed for standard attention.
For example, using 8-level MLR with $r_l = r/8$ would yield a 4x savings. Our proposed MLR attention is also compatible with grouped-query attention \cite{ainslie2023gqatraininggeneralizedmultiquery} if we apply the above transformations to different attention heads, which can lead to further savings in the KV cache size. 

\begin{figure*}[t!]
    \centering
    \begin{subfigure}[t]{0.4\linewidth}
        \centering
        \includegraphics[width=\linewidth]{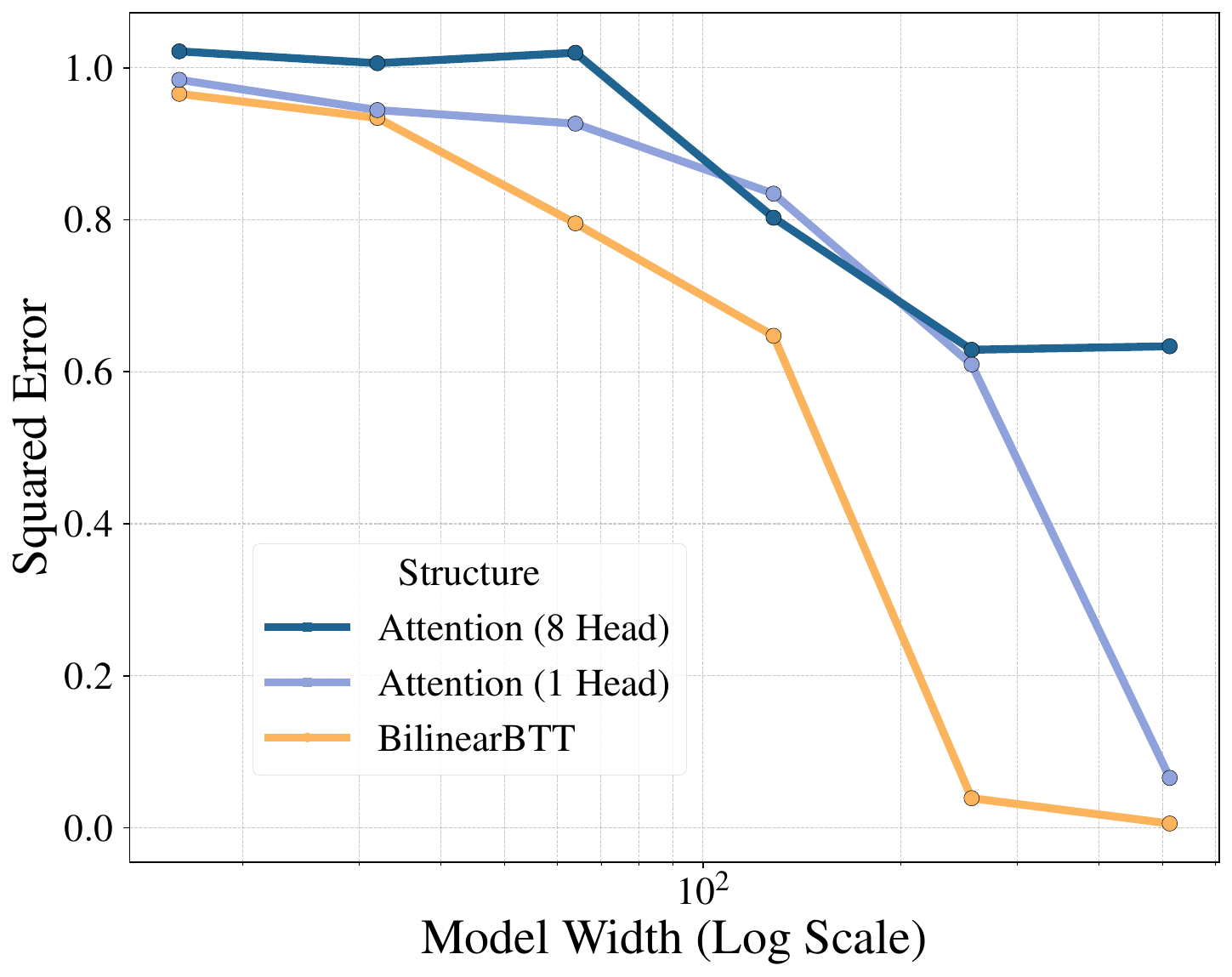}
        \caption{Input Dimension ($d_{\text{input}}$) = 128.}
        \label{fig:iclsubfig1}
    \end{subfigure}
    \hspace{35pt}
    \begin{subfigure}[t]{0.4\linewidth}
        \centering
        \includegraphics[width=\linewidth]{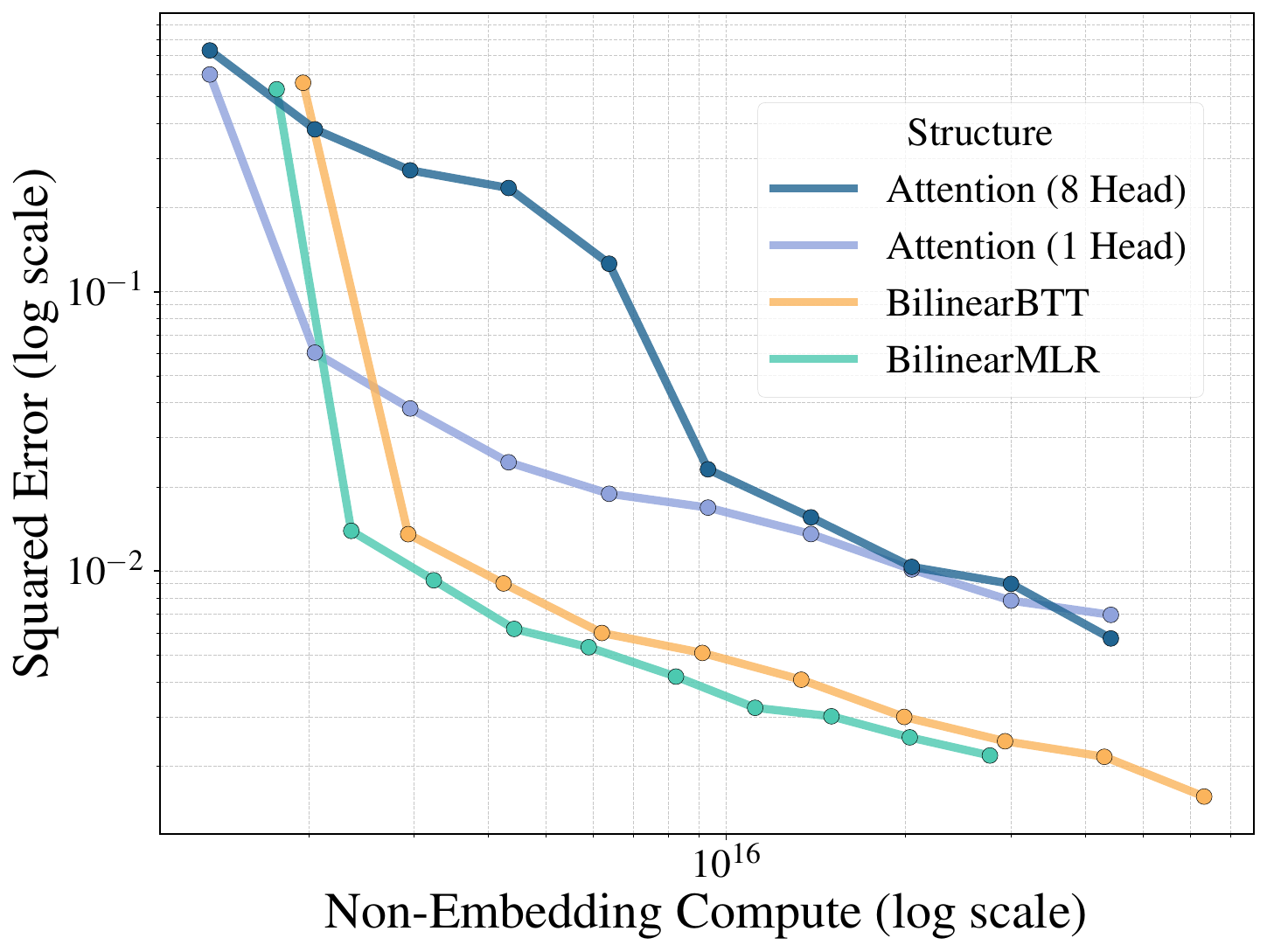}
        \caption{Input Dimension ($d_{\text{input}}$) = 64.}
        \label{fig:iclsubfig2}
    \end{subfigure}
    \caption{\textbf{Both Bilinear BTT and Bilinear MLR outperform standard attention for in-context regression tasks}. (a) Bilinear BTT and $1$-head attention are full rank, so they can learn the task using a smaller model width (x-axis) than $8$-head attention. Here model width refers to the embedding dimension $D$. All points on this figure correspond to models that was trained for the same number of steps. (b) For a fixed model width ($D=256$), both Bilinear MLR and Bilinear BTT have smaller regression error compared to attention with $1$ or $8$ heads when controlling for compute. Here our bilinear models use $8$ heads.}
    \label{fig:iclresultsfig}
\end{figure*}

In addition, our MLR attention is also compatible with relative positional embedding methods like RoPE \cite{su2023roformerenhancedtransformerrotary}. RoPE encodes positional information by rotating query and key vectors by an angle proportional to the token’s index in the sequence. Thus RoPE imposes a form of locality bias in the scoring function. In contrast, our MLR attention is not trying to encode relative positional information or boost the attention score between neighboring tokens. Instead, we change the computational cost of the scoring function based on the tokens’ positions. Put differently, standard attention (with or without RoPE) uses the same query/key dimension for all pairs of tokens. We effectively use a smaller query/key dimension for tokens that are far apart. This saves FLOPs compared to standard attention, while still allowing high quality attention scores for neighboring tokens. 

\subsection{Practical Considerations}\label{sec:practicalconsiderationpart}

Our attention variants are implemented using batch matrix multiplications. We present the most efficient tensor contraction order in \cref{appendix:contraction}.
Maximum Update Parameterization ($\mu$P) is a recipe to set the learning rates and initializations of each weight matrix in a neural network for stable feature learning as the width of the network grows \citep{yang2022tensorprogramsvtuning}. 
We describe how to adapt $\mu$P to our structured attention variants, along with our use of normalization layers, in \cref{appendix:mupappendix}. We also include a figure which shows stable learning rate transfer across width for our MLR attention model in \cref{fig:appendixmupadditionalfigure} in \cref{appendix:mupfigureappendix}.

\section{Experiments with the Low Rank Bottleneck}
\label{sec:iclregressiondetails}

Since both Bilinear BTT and Bilinear MLR in \cref{eq:bilinearbttdef,eq:bilinearmlrdef} increase the rank with fewer FLOPs compared to a dense matrix, we test if our new scoring functions based on these structured matrices can resolve rank bottleneck on the in-context regression task based on \citet{garg}. 

Following \citet{garg}, we train transformers on prompts of the form $\mathbf{x}_1, f(\mathbf{x}_1), \mathbf{x}_2, f(\mathbf{x}_2), \ldots, \mathbf{x}_{N}$ to output $f(\mathbf{x}_{N})$, where $\mathbf{x}_i \in \mathbb{R}^{d_{\text{input}}}$ and  $f(\mathbf{x}_i)=\mathbf{w}^\top \mathbf{x}_i$.
The linear functional $\mathbf{w}\sim\mathcal{N}(\mathbf{0}, \mathbf{I}_{d_{\text{input}}})$ is freshly sampled for each prompt, so the model must use in-context learning to infer $\mathbf w$ and apply it to $\mathbf x_{N}$.
We draw $\mathbf{x}_i\sim\mathcal{N}(\mathbf{0}, \frac{1}{d_{\text{input}}}\mathbf{I}_{d_{\text{input}}})$ and set $N=2d_{\text{input}}$. We train the model causally on the loss $\frac{1}{N}\cdot\sum_{i=1}^{N}(\hat{f}(\mathbf{x}_i)-f(\mathbf{x}_i))^2$ where $\hat{f}(\mathbf{x}_i)$ is the model prediction given the first $i-1$ pairs as context.

Unlike language models, which have token embedding and unembedding layers, we have an input linear layer that maps input points from $\R^{d_{\text{input}}}$ to embeddings in $\R^D$ and an output linear layer that maps embeddings to $\R$.
Following $\mu$P \citep{yang2022tensorprogramsvtuning}, the last layer is zero-initialized so that $\mathbb{E}[(\hat{f}(\mathbf{x}_i)-f(\mathbf{x}_i))^2]=1$ at initialization.

In \cref{fig:iclresultsfig}, we show that the structured attention variants introduced in \cref{sec:32bilinear} outperform standard attention on this task.
In \cref{fig:iclsubfig1} we plot the error of Bilinear BTT attention with 8 heads and standard attention with 1 and 8 heads across a range of model widths $D$.
Standard 8-head attention suffers from a low-rank bottleneck that prevents it from learning this task even with $D=512$. 1-head attention, for which $r=D$, has a full-rank scoring function. It learns the task at $D = 512$.
Bilinear BTT is also full rank, even with 8 heads, so it performs well too. In this case, it needs only $D=256$.
We defer a full set of results across multiple input dimensions $d_{\text{input}}$ to \cref{fig:appendixiclresultsfiglotoffigures} in the appendix. 

\cref{fig:iclsubfig2} shows that both BilinearBTT and BilinearMLR attention attain significantly higher accuracy when controlling for training compute.
We measure compute in FLOPs, excluding the input and output projection layers (which have the same cost across all these models).\footnote{We did not optimize the implementations of our methods, so they are somewhat slower in wall-clock time than standard attention. However \cref{fig:appendixiclwalltimefigs} shows that our methods are still superior controlling for wall-clock time and hardware. Since they use parallelizable tensor operations, we believe better speed is attainable.}  We present the full set of sweeps across different model widths $D$ and input dimensions $d_{\text{input}}$ in \cref{fig:appendixiclfigurendims16,fig:appendixiclfigurendims32,fig:appendixiclfigurendims64,fig:appendixiclfigurendims128} in the appendix. 

\begin{figure*}[t!]
    \centering
    \begin{subfigure}[t]{0.425\linewidth}
        \centering
        \includegraphics[width=\linewidth]{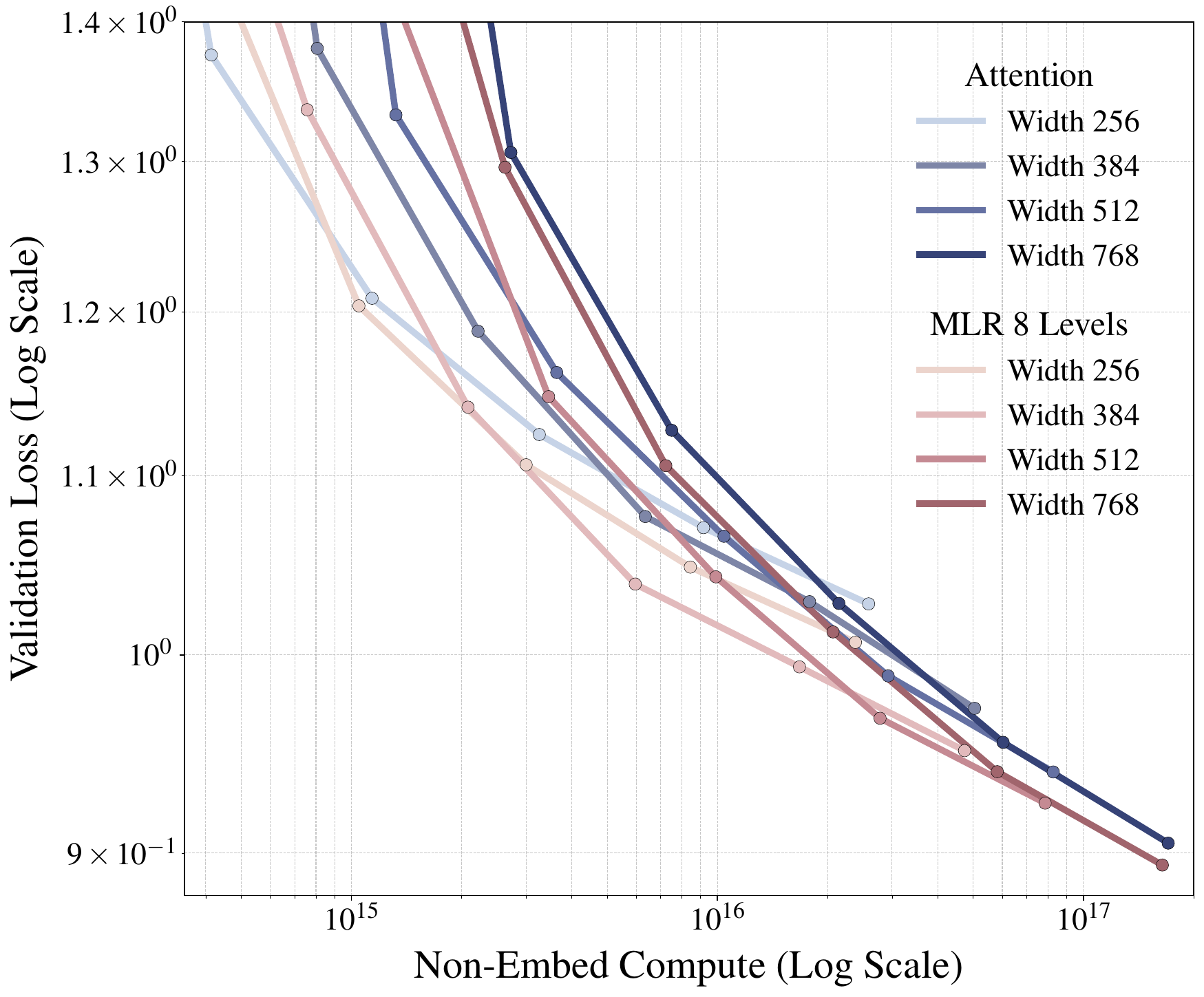}
        \caption{Scaling Law on OpenWebText}
        \label{fig:llmsubfig1}
    \end{subfigure}
    \hspace{35pt}
    \begin{subfigure}[t]{0.4\linewidth}
        \centering
        \includegraphics[width=\linewidth]{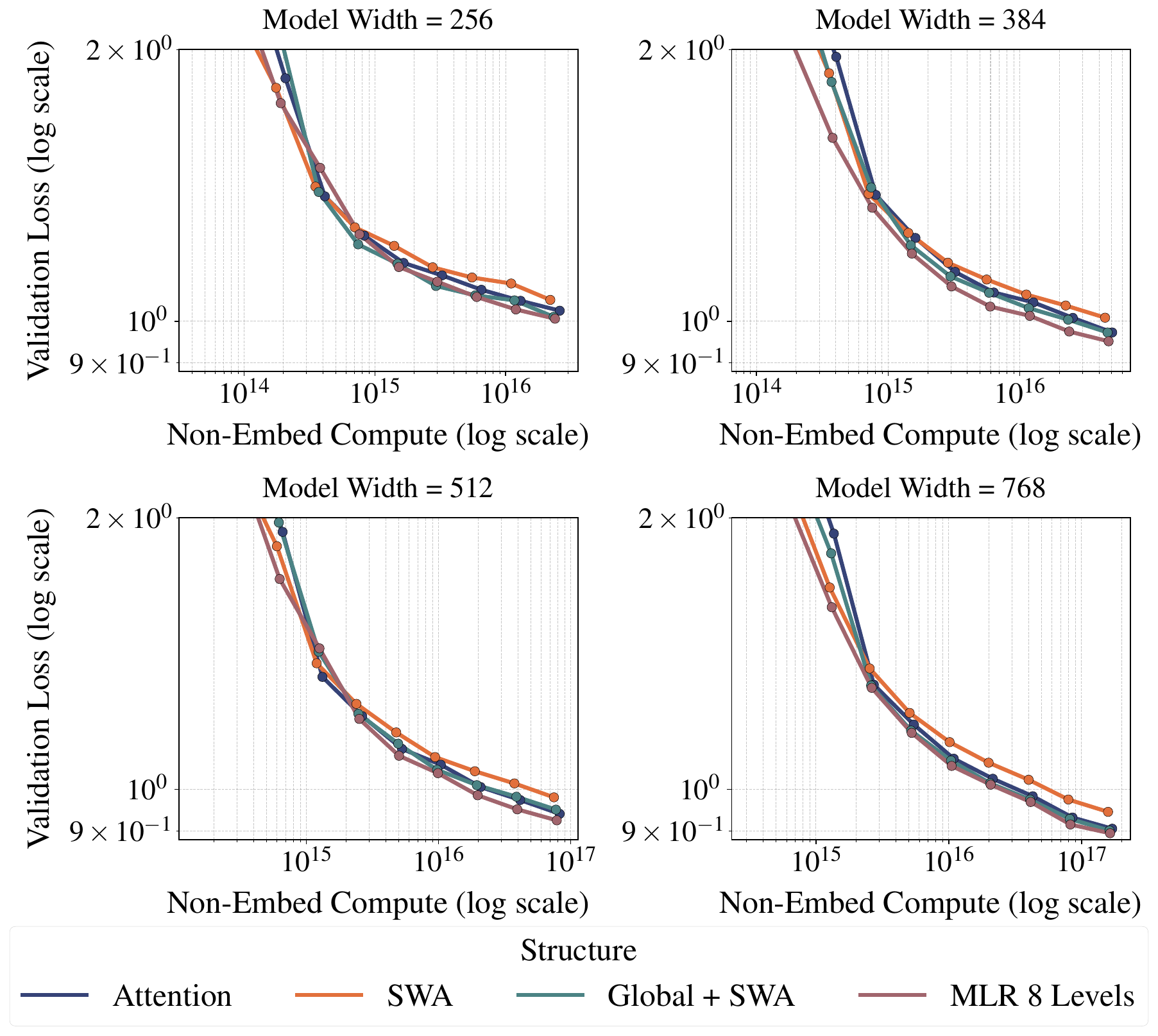}
        \caption{Different Forms of Distance-Dependent Compute}
        \label{fig:llmsubfig2}
    \end{subfigure}
    \caption{\textbf{MLR Attention achieves lower validation loss on OpenWebText compared to standard multi-head attention and variants of sliding window attention when controlling for compute.} (a) We plot the validation loss of both MLR attention with 8 levels and standard attention against compute. We vary over $4$ values of model width $D\in\{256, 384, 512, 768\}$. MLR attention outperforms standard attention across model widths. (b) We compare MLR attention with variants of sliding window attention (SWA) that also encodes distance-dependent compute bias. Given the training sequence length $T=1024$ and a $6$-layers transformer model, SWA refers to $6$ layers of sliding window attention with window size $T'=128$. Global + SWA refers to a combination of standard attention and sliding window attention with $T'=128$. The first and fourth layers of the model are standard attention and the rest are sliding window attention. We observe that across the model width $D$, MLR attention outperforms standard attention, SWA, and Global + SWA when controlling for compute.}
    \label{fig:mlrattllmperf}
\end{figure*}

\section{Experiments with Distance-Dependent Compute Bias}
\label{sec:local_MLR}

\subsection{Language Modeling}

We train $6$-layer transformers with both standard attention and MLR attention on the OpenWebText dataset with a batch size of $4$, sequence length $T = 1024$, head dimension $r = 64$, and model width $D \in \{256, 384, 512, 768\}$. The model is trained with AdamW \citep{loshchilov2019decoupledweightdecayregularization} and we tune hyperparameters based on $\mu$P \citep{yang2022tensorprogramsvtuning}. We use character-level tokenization, which improves our ability to study scaling with model size with limited compute budget \citep{potapczynski2024searchingefficientlinearlayers}.

In \cref{fig:llmsubfig1}, we compare the language modeling performance of standard attention to $8$-level MLR attention across model widths $D$. The rank allocation is $32\vert 8 \vert 6 \vert 4 \vert 4 \vert 4 \vert 4 \vert 2$, for a total of $r=64$. MLR attention outperforms standard attention across all model widths throughout training, and thus exhibits a better scaling law.

In \cref{fig:llmsubfig2}, we compare MLR attention to standard (global) attention, sliding window attention (SWA) \citep{child2019generatinglongsequencessparse,beltagy2020longformerlongdocumenttransformer}, and an architecture with alternating layers of global and sliding window attention. Like MLR attention, these architectures aim to take advantage of locality. Across model widths and training time, MLR attention performs best when controlling for compute.

\subsection{Time Series Forecasting}

We also compare standard attention and MLR attention on time series prediction by substituting the attention mechanism in the foundational model of Chronos \citep{ansari2024chronos}. Similar to our language modeling results, MLR attention achieves lower validation loss with reduced computational cost as seen in Figure \ref{fig:chronos}. For this experiment, we trained a Chronos model on the data mixture from \citet{ansari2024chronos} using a combination of synthetic kernel data and real data. We used a $6$-level MLR rank split of $32|10|8|8|4|2$.

In addition, we also evaluated  on the Electricity Transformer Temperature (ETT) dataset \citep{zhou2021informerefficienttransformerlong} that tracks fluctuations of oil temperature along with six additional power load features across time. As shown in \cref{fig:appendixmlratttimeseriesperf}, MLR attention gradually outperforms standard attention in oil temperature prediction accuracy as the time horizon grows. We defer additional details to \cref{appendix:timeseriessection}.

\begin{figure}[t!]
        \centering
        \includegraphics[width=\columnwidth]{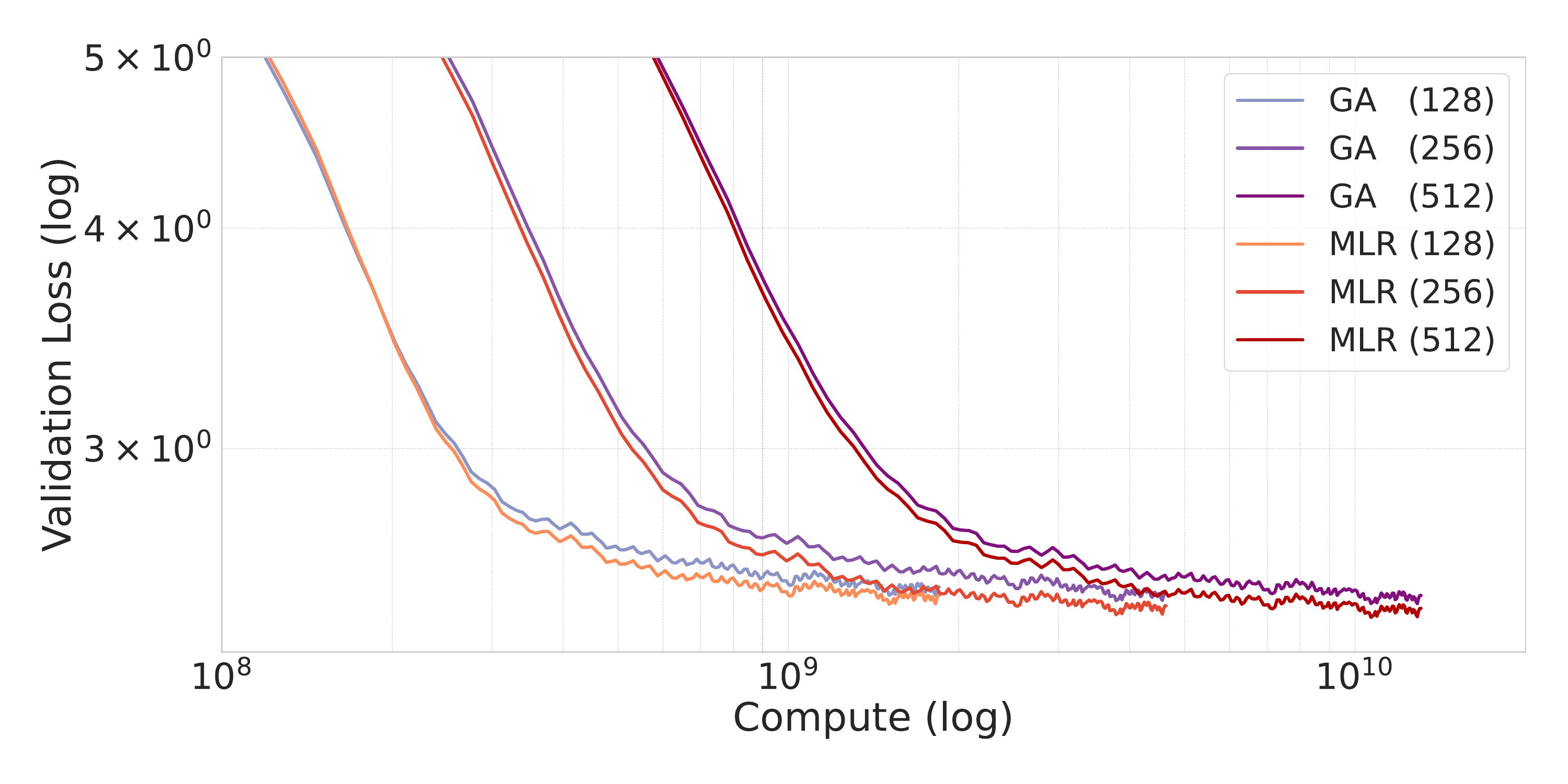}
    \caption{
    \textbf{MLR attention reduces computational cost when compared to standard attention.}
    We plot the validation loss of three distinct model sizes when substituting the T5 attention mechanism in Chronos \citep{ansari2024chronos} from standard attention to MLR attention. Here GA stands for global attention, which is just the standard attention mechanism. As shown in the figure, MLR attention achieves slightly lower validation loss compared to standard attention with smaller FLOPs across different embedding dimensions ranging from $128$ to $512$.
    }
    \label{fig:chronos}
\end{figure}

\section{Related Work}
Previous work has replaced dense neural network weight matrices by structured matrices---including low displacement rank \citep{10.5555/3327546.3327580}, low-rank plus sparse or diagonal \citep{han2024sltrain, wei2024building},
Monarch \citep{dao2022monarchexpressivestructuredmatrices}, BTT \citep{qiu2024computebetterspentreplacing}, and others \citep{potapczynski2024searchingefficientlinearlayers}---for greater efficiency, as well as for fine-tuning \citep{hu2021lora,sehanobish2024structured} and compression-based generalization bounds \citep{lotfi2024nonvacuousgeneralizationboundslarge,lotfi2024unlockingtokensdatapoints}.

Structured matrices have also been used in place of the attention matrix.
\citet{NEURIPS2021_9185f3ec} approximates the attention matrix by a sparse plus low-rank matrix.
\citet{hwang2024hydrabidirectionalstatespace} proposes the matrix mixer framework that identifies sequence mixers like Attention, S4 \citep{gu2022efficientlymodelinglongsequences}, H3 \citep{fu2023hungryhungryhipposlanguage}, Hyena \citep{poli2023hyenahierarchylargerconvolutional}, and Mamba \citep{gu2024mambalineartimesequencemodeling,dao2024transformersssmsgeneralizedmodels} with particular structured families. As in our work, they show that structured matrices offer a systematic and principled way to explore the design space.

Following sliding window attention \citep{child2019generatinglongsequencessparse,beltagy2020longformerlongdocumenttransformer}, many methods have been proposed to combine local attention with some global component \citep[\S 3.1]{beltagy2020longformerlongdocumenttransformer,hatamizadeh2023global,arora2024simplelinearattentionlanguage, behrouz2024titans,huang2023advancing}.
Like our MLR attention, other work has constructed hierarchically nested levels of progressively longer-range attention
\citep[\S 3.2]{ye2019bp,zhu-soricut-2021-h,10.5555/3540261.3541982,huang2023advancing}.
Unlike us, their goal is to make attention sublinear in the sequence length. Thus, they must aggregate tokens in each block together (e.g. using max-pool), departing significantly from the form of standard attention.
Our study of the low rank bottleneck in attention follows that of \citet{10.5555/3524938.3525019,sanford2023representational} and \citet{amsel2024benefitsrankattentionlayers}, though past work has not tried modifying attention to fix it. In addition, many works have tried to reduce the quadratic dependence on the sequence length of standard attention by approximating the softmax computations with random features \cite{choromanski2022rethinkingattentionperformers}, removing softmax \cite{katharopoulos2020transformersrnnsfastautoregressive}, or in general reducing KV caches in the model architecture \cite{ainslie2023gqatraininggeneralizedmultiquery,yuan2025nativesparseattentionhardwarealigned}.

\section{Discussion}
In this paper, we use structured matrices to design variants of attention with inductive biases that are beneficial for certain tasks. Bilinear BTT and Bilinear MLR attention resolve the low-rank bottleneck for in-context regression. Both bilinear structures allow the scoring function to be more expressive compared to low rank matrices while being efficient. While some tasks including language modeling do not seem to suffer from the low rank bottleneck, we believe our technique can improve transformers' performance on a variety of datasets that are intrinsically high-dimensional, especially scientific and PDE data. In addition, we use MLR attention to create a flexible distance-dependent compute bias that improves performance on language modeling and time series prediction with fewer FLOPs. Here too, we believe our methods can have further impact on point cloud data and tasks like code generation, which has not only a strong positional structure, but multiple levels of nested structures such as functions, classes, files, and packages. Furthermore, we define the Multi-Level Block Tensor Contraction (MLBTC) class in \cref{sec:mlbtcdefsection} to generalize BTT and contiguous MLR. We hope future work will explore the benefits and inductive biases of MLBTC's greater flexibility in attention and elsewhere.

In this work, we study the attention score matrix $\mathbf X \mathbf W_{Q_i} \mathbf W_{K_i}^\top \mathbf X^\top$ and the low-rank scoring function at its center.
Similar techniques could be applied to the other part of each attention head, defined by the low-rank matrix $\mathbf W_{V_i} \mathbf W_{O_i}^\top$.
Rather than customizing each head in a transformer the same way, different structures could be used for different heads, as in \citet{xu2025mswa}, to promote a better division of labor.
Finally, our techniques could be adopted for fine-tuning or model compression rather than pre-training. We defer further exploration of these aspects to future work.

\section*{Acknowledgements}
We thank Alan Amin and Hoang Phan for helpful discussions and anonymous reviewers for helpful feedback. This work was supported in part by NSF CAREER IIS-2145492, NSF CDS\&E-MSS 2134216, NSF HDR-2118310, NSF Award 1922658, BigHat Biosciences, Capital One, and an Amazon Research Award.

\bibliography{example_paper}
\bibliographystyle{icml2025}

\newpage
\appendix
\onecolumn

\section{Rectangular Structured Matrices}\label{appendix:rectangularstructmat}
In this section, we discuss the extension of contiguous Multi-Level Low Rank (MLR) matrices and Block Tensor Train (BTT) matrices to rectangular shape $m\times n$. The remaining appendix sections also follow the notations defined for rectangular MLR and rectangular BTT matrices.
\subsection{Rectangular MLR}
An MLR matrix with shape $m \times n$ is given by 
\begin{align}
\sum_{l=1}^{L}\bigoplus_{k=1}^{p_l}\mathbf{L}_{l,k}\mathbf{R}_{l,k}^\top
\end{align}
where $\bigoplus_{k=1}^{p_l}\mathbf{L}_{l,k}\mathbf{R}_{l,k}^\top\in\mathbb{R}^{m\times n}$ is a block diagonal matrix with $p_l$ blocks at level $l$. Each block $\mathbf{L}_{l,k}\mathbf{R}_{l,k}^\top$ is a low-rank product for $\mathbf{L}_{l,k}\in\mathbb{R}^{m_{l,k}\times r_l}$ and $\mathbf{R}_{l,k}\in\mathbb{R}^{n_{l,k}\times r_l}$. By construction, we have $\sum_{k=1}^{p_l}m_{l,k}=m$ and $\sum_{k=1}^{p_l}n_{l,k}=n$. The MLR rank of the matrix $\mathbf{W}$ is defined as $r_{\text{MLR}}=\sum_{l=1}^L r_l$. 

\subsection{Rectangular BTT}
A BTT matrix of shape $m\times n$ is given by
\begin{align}
\mathbf{P}_{L}\bigoplus_{k'=1}^{b}\mathbf{L}_{k'}\mathbf{P}_{R}\bigoplus_{k=1}^{c}\mathbf{R}_{k}^\top
\end{align}
where $\mathbf{R}_{k}\in\mathbb{R}^{d\times bs}, \mathbf{L}_{k'}\in\mathbb{R}^{a\times cs}$, $ab=m$, $cd=n$. The extra dimension $s$ is the BTT rank. $\mathbf{P}_{L}$ permutes the rows by rearranging dimension $ba$ into $ab$. This permutation is equivalent to reshaping a vector $\mathbf{z}\in\mathbb{R}^{ba}$ into a matrix $\mathbf{Z}\in\mathbb{R}^{b\times a}$, tranposing to get $\mathbf{Z}^\top\in\mathbb{R}^{a\times b}$, and reshaping it back to a vector $\mathbf{z}'\in\mathbb{R}^{ab}$. $\mathbf{P}_{R}$ permutes the rows by rearranging dimension $cbs$ into $bcs$. 

\section{Bilinear Attention Scoring Functions with Structured Matrices}\label{appendix:bilinearattnscoringfuncstructmat}

In this section, we describe how the bilinear BTT and bilinear MLR attention scoring functions can be interpreted in terms of queries and keys with projection matrices that are structured instead of dense. 

\subsection{Notations}
The appendix section adopts the following notation. Let $B$ be the batch size, $T$ be the sequence length during training,  $\mathbf{W}_{Q},\mathbf{W}_{K},\mathbf{W}_{V}\in\mathbb{R}^{D\times r}$ be projection matrices, and $H$ be the number of heads such that $D=Hr$. For any $\mathbf{x}_i\in\mathbb{R}^{D\times 1}$, we obtain $\mathbf{q}_{i},\mathbf{k}_{i},\mathbf{v}_{i}\in\mathbb{R}^{r\times 1}$ by the projections $\mathbf{q}_i=\mathbf{W}_{Q}^\top\mathbf{x}_i, \mathbf{k}_i=\mathbf{W}_{K}^\top\mathbf{x}_i, \mathbf{v}_i=\mathbf{W}_{V}^\top\mathbf{x}_i$. Since matrices in attention usually have square shapes, we assume $m=n=D$. 

\subsection{Bilinear MLR}
Let $\mathbf{X}\mathbf{W}_{Q}\mathbf{W}_{K}^\top\mathbf{X}^\top$ be the form of attention matrix per attention head. We can replace the low rank structure $\mathbf{W}_{Q}\mathbf{W}_{K}^\top$ with $\text{MLR}(\mathbf{W}_{Q}, \mathbf{W}_{K})$ and obtain the Bilinear MLR projections: 
\begin{align}
    \mathbf{Q}^{\text{MLR}}_{l} &= \mathbf{P}_1\bigg(\mathbf{X}\bigoplus_{k=1}^{p_l}\mathbf{W}_{Q_{l,k}}\bigg)\\
    \mathbf{K}^{\text{MLR}}_{l} &= \mathbf{P}_2\bigg(\bigoplus_{k=1}^{p_l}\mathbf{W}_{K_{l,k}}^\top\mathbf{X}^\top\bigg)
\end{align}
where $\mathbf{W}_{Q_{l,k}}\in\mathbb{R}^{m_{l,k}\times H r_l}$, $\mathbf{W}_{K_{l,k}}\in\mathbb{R}^{n_{l,k}\times H r_l}$, and thus $\bigoplus_{k=1}^{p_l}\mathbf{W}_{Q_{l,k}}\in\mathbb{R}^{m\times p_l H r_l}$, $\bigoplus_{k=1}^{p_l}\mathbf{W}_{K_{l,k}}\in\mathbb{R}^{n\times p_l H r_l}$. Thus the shape of the product $\mathbf{X}\bigoplus_{k=1}^{p_l}\mathbf{W}_{Q_{l,k}}$ is $(BT, p_l H r_l)$ and the shape of the product $\bigoplus_{k=1}^{p_l}\mathbf{W}_{K_{l,k}}^\top\mathbf{X}^\top$ is $(p_l H r_l, BT)$. The permutation matrix $\mathbf{P}_1$ rearranges the shape $(BT, p_l H r_l)$ into $(B, H, T, r_l p_l)$. $\mathbf{P}_2$ rearranges the shape $(p_l H r_l, BT)$ into $(B, H, r_l p_l, T)$.

The final output of the Bilinear MLR sequence mixing operation $\mathbf{O}\in\mathbb{R}^{B\times H\times T\times T}$ is given by
\begin{align}
    \mathbf{O}_{\alpha\beta} = \sigma(\mathbf{M}\circ\sum_{l=1}^{L}\mathbf{Q}^{\text{MLR}}_{l\alpha\beta}\mathbf{K}^{\text{MLR}}_{l\alpha\beta})
\end{align}
where $\mathbf{Q}^{\text{MLR}}_{l\alpha\beta}\in\mathbb{R}^{T\times r_l p_l}$ and $\mathbf{K}^{\text{MLR}}_{l\alpha\beta}\in\mathbb{R}^{r_l p_l\times T}$. In practice we assume $p_l=2^{l-1}$, and $m_{l,k}=n_{l,k}=D/2^{l-1}$.

\subsection{Bilinear BTT}
We can replace the low rank structure $\mathbf{W}_{Q}\mathbf{W}_{K}^\top$ with $\text{BTT}(\mathbf{W}_{Q}, \mathbf{W}_{K})$ and obtain the Bilinear BTT projections: 
\begin{align} \mathbf{Q}^{\text{BTT}}&=\mathbf{P}_3\bigg(\mathbf{X}\bigoplus_{k=1}^{b}\mathbf{W}_{Q_{k}}\bigg)\\
\mathbf{K}^{\text{BTT}}&=\mathbf{P}_4\bigg(\bigoplus_{k=1}^{c}\mathbf{W}_{K_{k}}^\top\mathbf{X}^\top\bigg)
\end{align}
where we assume $\mathbf{X}\in\mathbb{R}^{BT\times D}$,  $\mathbf{W}_{Q_{k}}\in\mathbb{R}^{a\times Hcs},\mathbf{W}_{K_{k}}\in\mathbb{R}^{d\times Hbs}$ and thus $\bigoplus_{k=1}^{b}\mathbf{W}_{Q_{k}}\in\mathbb{R}^{ba\times bHcs},\bigoplus_{k=1}^{c}\mathbf{W}_{K_{k}}^\top\in\mathbb{R}^{cHbs\times cd}$. Here we assume $D=ab=cd$. Thus the shape of the product $\mathbf{X}\bigoplus_{k=1}^{b}\mathbf{W}_{Q_{k}}$ is $(BT, bHcs)$ and the shape of the product $\bigoplus_{k=1}^{c}\mathbf{W}_{K_{k}}^\top\mathbf{X}^\top$ is $(cHbs, BT)$. The permutation matrix $\mathbf{P}_{3}$ rearrange the shape $(BT, bHcs)$ into $(B, H, T, sbc)$. $\mathbf{P}_{4}$ rearrange the shape $(cHbs, BT)$ into $(B, H, sbc, T)$. 

Let $\alpha$ denotes the batch index and $\beta$ denotes the head index such that $\mathbf{Q}^{\text{BTT}}_{\alpha\beta}\in\mathbb{R}^{T\times sbc}$ and $\mathbf{K}^{\text{BTT}}_{\alpha\beta}\in\mathbb{R}^{sbc\times T}$. The final output of the bilinear BTT $\mathbf{O}\in\mathbb{R}^{B\times H\times T\times T}$ is compute as
\begin{align}
    \mathbf{O}_{\alpha\beta} = \sigma(\mathbf{M}\circ\mathbf{Q}^{\text{BTT}}_{\alpha\beta}\mathbf{K}^{\text{BTT}}_{\alpha\beta})
\end{align}
where $\sigma$ is the softmax function and $\mathbf{M}$ is the lower diagonal causal mask matrix. Both $B$ and $H$ are extra batch dimensions that can be computed efficiently using the batch matrix
multiplication primitives.  

In practice, it's always true that $sbc\gg r$ where $r$ is the head dimension for standard multi-head attention. 

\section{MLR Attention Derivation}\label{appendix:mlrattenderivation}
In this section, we present \cref{lemma:mlrlemma} that connects the matrix form of MLR and its scoring function formula.

\begin{lemma}\label{lemma:mlrlemma}
    The $(j, j')$ entry of the scoring matrix $\mathbf S = \sum_{l=1}^{L}\bigoplus_{k=1}^{p_l}\mathbf{Q}_{l,k}\mathbf{K}_{l,k}^\top$ has the form $\mathbf x_j^\top \left(\sum_{l=1}^{d(j, j')} \mathbf L_l \mathbf R_l^\top\right)\mathbf x_{j'}$ as shown in the right hand side of \cref{eq:hierarhical_score}.
\end{lemma}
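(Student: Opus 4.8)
The plan is to unwind the block-diagonal sum entry by entry and match it against the definition of $d(j,j')$. First I would fix the column-block decompositions $\mathbf W_Q = [\mathbf L_1 \mid \cdots \mid \mathbf L_L]$ and $\mathbf W_K = [\mathbf R_1 \mid \cdots \mid \mathbf R_L]$ implied by the construction, so that the level-$l$ query and key matrices are $\mathbf Q^{(l)} = \mathbf X \mathbf L_l \in \R^{T \times r_l}$ and $\mathbf K^{(l)} = \mathbf X \mathbf R_l \in \R^{T \times r_l}$; in particular row $j$ of $\mathbf Q^{(l)}$ is $\mathbf L_l^\top \mathbf x_j$ and row $j'$ of $\mathbf K^{(l)}$ is $\mathbf R_l^\top \mathbf x_{j'}$. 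The factors $\mathbf Q_{l,k}$ and $\mathbf K_{l,k}$ are obtained by cutting $\mathbf Q^{(l)}$ and $\mathbf K^{(l)}$ into $p_l$ contiguous row-blocks of equal size $T/p_l$, which amounts to partitioning the index set $\{1,\ldots,T\}$ into $p_l$ consecutive intervals; write $\pi_l(j) \in \{1,\ldots,p_l\}$ for the block containing $j$. Since $p_l = 2^{l-1}$, these partitions are nested: the level-$(l+1)$ intervals refine the level-$l$ intervals, and $\pi_1 \equiv 1$ because $p_1 = 1$.

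Next I would compute the contribution of a single level to $\mathbf S_{j,j'}$. Because $\bigoplus_{k=1}^{p_l}\mathbf Q_{l,k}\mathbf K_{l,k}^\top$ is block-diagonal with respect to this partition, its $(j,j')$ entry is $0$ unless $\pi_l(j) = \pi_l(j')$, and when the two indices share a block $k$, the global position $(j,j')$ corresponds to the local position $(j - o,\, j' - o)$ inside the $k$-th block, where $o = (k-1)\,T/p_l$ is the common offset. The common offset cancels, so this entry equals the inner product of row $j$ of $\mathbf Q^{(l)}$ with row $j'$ of $\mathbf K^{(l)}$, namely $(\mathbf L_l^\top \mathbf x_j)^\top (\mathbf R_l^\top \mathbf x_{j'}) = \mathbf x_j^\top \mathbf L_l \mathbf R_l^\top \mathbf x_{j'}$.

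Finally I would identify which levels contribute. By definition $d(j,j')$ is the deepest level at which $j$ and $j'$ still lie in a common block, so by nestedness $\pi_l(j) = \pi_l(j')$ holds exactly for $l \le d(j,j')$. Summing the per-level values computed above then gives $\mathbf S_{j,j'} = \sum_{l=1}^{d(j,j')} \mathbf x_j^\top \mathbf L_l \mathbf R_l^\top \mathbf x_{j'} = \mathbf x_j^\top \big( \sum_{l=1}^{d(j,j')} \mathbf L_l \mathbf R_l^\top \big) \mathbf x_{j'}$, which is the right-hand side of \cref{eq:hierarhical_score}. I expect the only mildly delicate point to be the bookkeeping in the middle step — confirming that $j$ and $j'$ receive the same offset within their shared block so that the block-diagonal entry is the ``aligned'' dot product $\mathbf x_j^\top \mathbf L_l \mathbf R_l^\top \mathbf x_{j'}$ and not an off-diagonal term; the rest is a direct expansion together with the nesting of the partitions.
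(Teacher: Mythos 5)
Your proof is correct and follows essentially the same route as the paper's: expand the block-diagonal sum entry by entry, observe that level $l$ contributes $\mathbf x_j^\top \mathbf L_l \mathbf R_l^\top \mathbf x_{j'}$ exactly when $j$ and $j'$ share a level-$l$ block, and use the nesting of the partitions to conclude that the contributing levels are precisely $l \le d(j,j')$. The only difference is that you carry out the general-$L$ argument with explicit offset bookkeeping, whereas the paper writes out the two-level case concretely and notes that the same reasoning extends; your version is the more complete of the two.
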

\begin{proof}
For simplicity, assume there are just two levels. So \cref{eq:hierarhical_score} reduces to the following expression:
\begin{align}
s_{j, j'}(\mathbf x_j, \mathbf x_{j'}) = 
\mathbf x_j^\top (\mathbf L_1 \mathbf R_1^\top + \mathbf L_2 \mathbf R_2^\top)\mathbf x_{j'}
\end{align}
Divide $\mathbf{X} = \begin{bmatrix}\mathbf{X}_1 \\ \mathbf{X}_2\end{bmatrix}$ into blocks. $\mathbf{X}_1$ is the first half of the sequence and $\mathbf{X}_2$ corresponds to the second half. Divide $\mathbf{W}_Q = \begin{bmatrix}\mathbf{L}_1 & \mathbf{L}_2\end{bmatrix}$. Thus 
\begin{align}
\mathbf{Q} = \mathbf{X} \mathbf{W}_Q = \begin{bmatrix}\mathbf{X}_1 \mathbf{L}_1 & \mathbf{X}_1 \mathbf{L}_2 \\ \mathbf{X}_2 \mathbf{L}_1 & \mathbf{X}_2 \mathbf{L}_2\end{bmatrix}
\end{align}
Now divide $\mathbf{Q}$ into 3 named blocks according to \cref{fig:fig1b}:
\begin{align}
    \mathbf{Q}_{11} &= \mathbf{X} \mathbf{L}_1 \\
    \mathbf{Q}_{21} &= \mathbf{X}_1 \mathbf{L}_2 \\
    \mathbf{Q}_{22} &= \mathbf{X}_2 \mathbf{L}_2
\end{align}
Analogously
\begin{align}
    \mathbf{K}_{11} &= \mathbf{X} \mathbf{R}_1 \\
    \mathbf{K}_{21} &= \mathbf{X}_1 \mathbf{R}_2 \\
    \mathbf{K}_{22} &= \mathbf{X}_2 \mathbf{R}_2
\end{align}
Finally, plug the above into the definition of $\mathbf{S}$:
\begin{align}
\mathbf{S} = \mathbf{Q}_{11}\mathbf{K}_{11}^\top + \begin{bmatrix}\mathbf{Q}_{21}\mathbf{K}_{21}^\top & \mathbf{0} \\ \mathbf{0} & \mathbf{Q}_{22} \mathbf{K}_{22}^\top\end{bmatrix} = \mathbf{X}\mathbf{L}_1\mathbf{R}_1^\top \mathbf{X}^\top + \begin{bmatrix}\mathbf{X}_1 \mathbf{L}_2 \mathbf{R}_2 ^\top \mathbf{X}_1^\top & \mathbf{0} \\ \mathbf{0} & \mathbf{X}_2 \mathbf{L}_2 \mathbf{R}_2^\top \mathbf{X}_2^\top\end{bmatrix}
\end{align}
Consider the $(j, j')$ entry of $\mathbf{S}$. If they’re in different blocks, then it’s $\mathbf{x}_j^\top \mathbf{L}_1 \mathbf{R}_1^\top \mathbf{x}_j$. If they're in the same block, it's $\mathbf{x}_j^\top \mathbf{L}_1 \mathbf{R}_1^\top \mathbf{x}_j + \mathbf{x}_j^\top \mathbf{L}_2 \mathbf{R}_2^\top \mathbf{x}_j$. The same reasoning applies when we have more than two levels. 
\end{proof}

\section{Optimal Tensor Contraction Order}
\label{appendix:contraction}
\subsection{Bilinear MLR}
\label{appendix:mlrappendix}
In this section, we compute the costs to do Bilinear MLR per attention head assuming batch size equal to $1$. We write bilinear MLR as 
\begin{align}
\mathbf{X}\text{MLR}(\mathbf{W}_{Q}, \mathbf{W}_{K})\mathbf{X}^\top&=\mathbf{X}\bigg(\sum_{l=1}^{L}\bigoplus_{k=1}^{p_l}\mathbf{W}_{Q_{l,k}}\mathbf{W}_{K_{l,k}}^\top\bigg)\mathbf{X}^\top\\
&=\sum_{l=1}^{L}\mathbf{X}\bigg(\bigoplus_{k=1}^{p_l}\mathbf{W}_{Q_{l,k}}\mathbf{W}_{K_{l,k}}^\top\bigg)\mathbf{X}^\top\\
&=\sum_{l=1}^{L}\bigg(\mathbf{X}\bigoplus_{k=1}^{p_l}\mathbf{W}_{Q_{l,k}}\bigg)\bigg(\bigoplus_{k=1}^{p_l}\mathbf{W}_{K_{l,k}}^\top\mathbf{X}^\top\bigg)\\
&=\sum_{l=1}^{L}\mathbf{X}\bigg(\bigoplus_{k=1}^{p_l}\mathbf{W}_{Q_{l,k}}\bigoplus_{k=1}^{p_l}\mathbf{W}_{K_{l,k}}^\top\mathbf{X}^\top\bigg)\\
&=\mathbf{X}\sum_{l=1}^{L}\bigg(\bigoplus_{k=1}^{p_l}\mathbf{W}_{Q_{l,k}}\bigoplus_{k=1}^{p_l}\mathbf{W}_{K_{l,k}}^\top\mathbf{X}^\top\bigg)
\end{align}
We summarize the FLOPs for each of these tensor contraction orders  in \cref{tab:flopsbilinearmlr}. We find that 
\begin{align}\label{eq:bilinearmlroptimaltensorcontractform}
\sum_{l=1}^{L}\bigg(\mathbf{X}\bigoplus_{k=1}^{p_l}\mathbf{W}_{Q_{l,k}}\bigg)\bigg(\bigoplus_{k=1}^{p_l}\mathbf{W}_{K_{l,k}}^\top\mathbf{X}^\top\bigg)
\end{align}
has minimal compute costs of $2TDr + T^2 \sum_{l=1}^{L}2^{l-1}r_l$. This is the form of bilinear MLR we used in all of our experiments.

\begin{table}[h]
\caption{FLOPs for Bilinear MLR with Different Tensor Contraction Ordering}
\label{tab:flopsbilinearmlr}
\vskip 0.15in
\begin{center}
\begin{small}
\begin{sc}
\begin{tabular}{lcccr}
\toprule
Tensor Contraction Order & FLOPs \\
\midrule
$\text{(Low Rank) } \mathbf{X}\mathbf{W}_{Q}\mathbf{W}_{K}^\top\mathbf{X}^\top$ & \makecell{$T^2 r + 2TDr$}\\
$\mathbf{X}\bigg(\sum_{l=1}^{L}\bigoplus_{k=1}^{p_l}\mathbf{W}_{Q_{l,k}}\mathbf{W}_{K_{l,k}}^\top\bigg)\mathbf{X}^\top$ & \makecell{$T^2 D+TD^2+$ \\ $D^2(\sum_{l=1}^L\frac{r_l}{2^{l-1}}+\sum_{l=1}^{L}\frac{1}{2^l}-\frac{1}{2})$} \\
$\sum_{l=1}^{L}\mathbf{X}\bigg(\bigoplus_{k=1}^{p_l}\mathbf{W}_{Q_{l,k}}\mathbf{W}_{K_{l,k}}^\top\bigg)\mathbf{X}^\top$ & $T^2 LD+\sum_{l=1}^{L}\frac{D^2 r_l}{2^{l-1}}+\frac{TD^2}{2^{l-1}}$ \\
$\sum_{l=1}^{L}\bigg(\mathbf{X}\bigoplus_{k=1}^{p_l}\mathbf{W}_{Q_{l,k}}\bigg)\bigg(\bigoplus_{k=1}^{p_l}\mathbf{W}_{K_{l,k}}^\top\mathbf{X}^\top\bigg)$ & $2TDr + T^2 \sum_{l=1}^{L}2^{l-1}r_l$\\
$\sum_{l=1}^{L}\mathbf{X}\bigg(\bigoplus_{k=1}^{p_l}\mathbf{W}_{Q_{l,k}}\bigoplus_{k=1}^{p_l}\mathbf{W}_{K_{l,k}}^\top\mathbf{X}^\top\bigg)$ & $LT^2 D+2TDr$\\
$\mathbf{X}\sum_{l=1}^{L}\bigg(\bigoplus_{k=1}^{p_l}\mathbf{W}_{Q_{l,k}}\bigoplus_{k=1}^{p_l}\mathbf{W}_{K_{l,k}}^\top\mathbf{X}^\top\bigg)$ & $T^2 D+2TDr$\\
\bottomrule
\end{tabular}
\end{sc}
\end{small}
\end{center}
\vskip -0.1in
\end{table}

\subsection{Bilinear BTT}
\label{appendix:bttappendix}
In this section, we analyze the FLOPs required for bilinear BTT assuming batch size equal to $1$ and $1$ attention head. We write bilinear BTT as
\begin{align}
    \mathbf{X}  \text{BTT}(\mathbf{W}_{Q}, \mathbf{W}_{K})\mathbf{X}^\top &= \bigg(\mathbf{X}\mathbf{P}_L\bigoplus_{k=1}^{b}\mathbf{W}_{Q_{k}}\bigg)\bigg(\mathbf{P}_R\bigoplus_{k=1}^{c}\mathbf{W}_{K_{k}}^\top\mathbf{X}^\top\bigg)\\
    &= \mathbf{X}\bigg(\mathbf{P}_L\bigoplus_{k=1}^{b}\mathbf{W}_{Q_{k}}\mathbf{P}_R\bigoplus_{k=1}^{c}\mathbf{W}_{K_{k}}^\top\mathbf{X}^\top\bigg)
\end{align}
where $\mathbf{X}\in\mathbb{R}^{T\times D}$,  $\mathbf{W}_{Q_{k}}\in\mathbb{R}^{a\times cs},\mathbf{W}_{K_{k}}\in\mathbb{R}^{d\times bs}$ with $D=ab=cd$. In \cref{tab:flopsbilinearbtt}, we list out the FLOP counts with the simplifying assumption of $a=b=c=d=\sqrt{D}$. Thus we should be using the form 
\begin{align}\label{eq:bilinearbttoptimaltensorcontractform}
\mathbf{X}\bigg(\mathbf{P}_L\bigoplus_{k=1}^{b}\mathbf{W}_{Q_{k}}\mathbf{P}_R\bigoplus_{k=1}^{c}\mathbf{W}_{K_{k}}^\top\mathbf{X}^\top\bigg)
\end{align}
with FLOPs $T^2D+2sTD^{3/2}$.

\begin{table}[h]
\caption{FLOPs for Bilinear BTT with Different Tensor Contraction Ordering}
\label{tab:flopsbilinearbtt}
\vskip 0.15in
\begin{center}
\begin{small}
\begin{sc}
\begin{tabular}{lcccr}
\toprule
Tensor Contraction Order & FLOPs \\
\midrule
$\bigg(\mathbf{X}\mathbf{P}_L\bigoplus_{k=1}^{b}\mathbf{W}_{Q_{k}}\bigg)\bigg(\mathbf{P}_R\bigoplus_{k=1}^{c}\mathbf{W}_{K_{k}}^\top\mathbf{X}^\top\bigg)$ & $sT^2D+2sTD^{3/2}$ \\
$\mathbf{X}\bigg(\mathbf{P}_L\bigoplus_{k=1}^{b}\mathbf{W}_{Q_{k}}\mathbf{P}_R\bigoplus_{k=1}^{c}\mathbf{W}_{K_{k}}^\top\mathbf{X}^\top\bigg)$ & $T^2D+2sTD^{3/2}$ \\
\bottomrule
\end{tabular}
\end{sc}
\end{small}
\end{center}
\vskip -0.1in
\end{table}

\section{Practical Considerations.}
\label{appendix:mupappendix}

\subsection{Maximal Update Parameterization ($\mu$P)}

For a dense matrix $\mathbf{W}\in\mathbb{R}^{d_{\text{out}}\times d_{\text{in}}}$ and an input vector $\mathbf{x}\in\mathbb{R}^{d_{\text{in}}}$, the output hidden state $\mathbf{h}\in\mathbb{R}^{d_{\text{out}}}$ is computed as $\mathbf{h}=\mathbf{W}\mathbf{x}$. $\mu$P requires both $\mathbf{h}$ and the gradient update $\Delta\mathbf{h}$ to have norm $\|\mathbf{h}\|_2=\|\Delta\mathbf{h}\|_2=\Theta(\sqrt{d_{\text{out}}})$ for stable feature learning under width scaling. This is equivalent to imposing the spectral norm constraints on the weight matrix such that $\|\mathbf{W}\|_{*}=\|\Delta\mathbf{W}\|_{*}=\Theta(\sqrt{d_{\text{out}}/d_{\text{in}}})$ \citep{yang2024spectralconditionfeaturelearning}. Thus the weight matrix $\mathbf{W}$ should be initialized from $\mathcal{N}(0,\sigma_{\mathbf{W}}^2)$ for $\sigma_{\mathbf{W}}=\Theta(1/\sqrt{d_{\text{in}}}\cdot\min\{1, \sqrt{d_{\text{out}}/d_{\text{in}}}\})$ with learning rate $\eta_{\mathbf{W}}=\Theta(d_{\text{out}}/d_{\text{in}})$.  For AdamW \citep{loshchilov2019decoupledweightdecayregularization}, the learning rate scales as $\eta_{\mathbf{W}}=\Theta(1/d_{\text{in}})$ \citep{yang2022tensorprogramsvtuning}. \citet{qiu2024computebetterspentreplacing} develops $\mu$P for structured linear layers that are expressible as compositions of dense batched matrix multiplications and reshapes. Since our structured attention variants fit this form, they are amenable to $\mu$P.

For bilinear MLR with the optimal tensor contraction order shown in \cref{eq:bilinearmlroptimaltensorcontractform}, we apply $\mu$P on the dense matrices $\mathbf{W}_{Q_{l,k}}\in\mathbb{R}^{m_{l,k}\times H r_l}$ and $\mathbf{W}_{K_{l,k}}\in\mathbb{R}^{n_{l,k}\times H r_l}$ where we assume $m_{l,k}=n_{l,k}=D/p_l$. Since the fan-in dimension for both $\mathbf{W}_{Q_{l,k}}$ and $\mathbf{W}_{K_{l,k}}$ are $D/p_l$, we initialize them as $\mathbf{W}_{Q_{l,k}},\mathbf{W}_{K_{l,k}}\sim\mathcal{N}(\mathbf{0}, \frac{p_l}{D}\mathbf{I})$. For learning rate, we first select a base value $\eta_{\text{base}}\in\{0.001, 0.0005,  0.0001, 0.00005, 0.00001\}$ at a certain model width $D_1$. Here the model width is the embedding dimension of the transformer model. Then for any model width $D_2 \gg D_1$, we set the learning rate $\eta_{\mathbf{W}_{Q_{l,k}}}$ and $\eta_{\mathbf{W}_{K_{l,k}}}$ to be 
\begin{align}
\eta_{\mathbf{W}_{Q_{l,k}}}=\eta_{\mathbf{W}_{K_{l,k}}}=\eta_{\text{base}}\frac{D_1}{D_2/p_l}
\end{align}
for optimal learning rate transfer across width. 

Bilinear BTT follows the optimal tensor contraction order in \cref{eq:bilinearbttoptimaltensorcontractform} where $\mathbf{W}_{Q_{k}}\in\mathbb{R}^{a\times cs}$ and $\mathbf{W}_{K_{k}}\in\mathbb{R}^{d\times bs}$. Thus we initialize them as $\mathbf{W}_{Q_{k}}\sim\mathcal{N}(\mathbf{0}, \frac{1}{cs}\mathbf{I})$ and $\mathbf{W}_{K_{k}}\sim\mathcal{N}(\mathbf{0}, \frac{1}{d}\mathbf{I})$. The optimal learning rates are given by
\begin{align}
    \eta_{\mathbf{W}_{Q_{k}}} &= \eta_{\text{base}}\frac{D_1}{cs}\\
    \eta_{\mathbf{W}_{K_{k}}} &= \eta_{\text{base}}\frac{D_1}{a}
\end{align}

Additionally, we apply RMS normalizations on the weight matrices $\mathbf{W}_{Q_{k}}$ and $\mathbf{W}_{K_{k}}$ for training stability as shown in \citet{qiu2024computebetterspentreplacing}. The rest of the parameters in the transformer models follow the recipe in \citet{yang2022tensorprogramsvtuning}. For MLR attention, we note that the fan-in dimensions are unchanged and thus it follows the $\mu$P recipe for a standard transformer architecture.

\subsection{QK LayerNorm}

Let $\text{LN}(\cdot)$ denote layer normalization. We apply normalizations to bilinear MLR over our optimal tensor contraction order to get
\begin{align}
\sum_{l=1}^{L}\text{LN}\bigg(\mathbf{X}\bigoplus_{k=1}^{p_l}\mathbf{W}_{Q_{l,k}}\bigg)\text{LN}\bigg(\bigoplus_{k=1}^{p_l}\mathbf{W}_{K_{l,k}}^\top\mathbf{X}^\top\bigg)*\frac{C}{r_l p_l}
\end{align}
for some tunable constant $C$. We also apply layer normalization to bilinear BTT and get 
\begin{align}
\text{LN}\bigg(\mathbf{X}\bigg)\text{LN}\bigg(\mathbf{P}_L\bigoplus_{k=1}^{b}\mathbf{W}_{Q_{k}}\mathbf{P}_R\bigoplus_{k=1}^{c}\mathbf{W}_{K_{k}}^\top\mathbf{X}^\top\bigg)*\frac{C^*}{ab}
\end{align}
for some tunable constant $C^*$.
In standard attention, the score matrix is normalized by $1/\sqrt{r}$ before applying softmax. $\mu$P recommends normalizing by $1/r$ instead \citep{yang2022tensorprogramsvtuning}. We incorporate the $\mu$P version of normalization into our architectures.
That is, for Bilinear MLR and Bilinear BTT, we use $C/r_l p_l$ and $C^*/ab$ respectively instead of $C/\sqrt{r_l p_l}$ and $C^*/\sqrt{ab}$.

\section{$\mu$P Additional Figures}\label{appendix:mupfigureappendix}
In this section we show that under the maximum update parametrization ($\mu$P), both our MLR attention and standard attention has stable learning rate across width as shown in \cref{fig:appendixmupadditionalfigure}.

\begin{figure}[h]
\vskip 0.2in
\begin{center}
\centerline{\includegraphics[width=0.6\columnwidth]{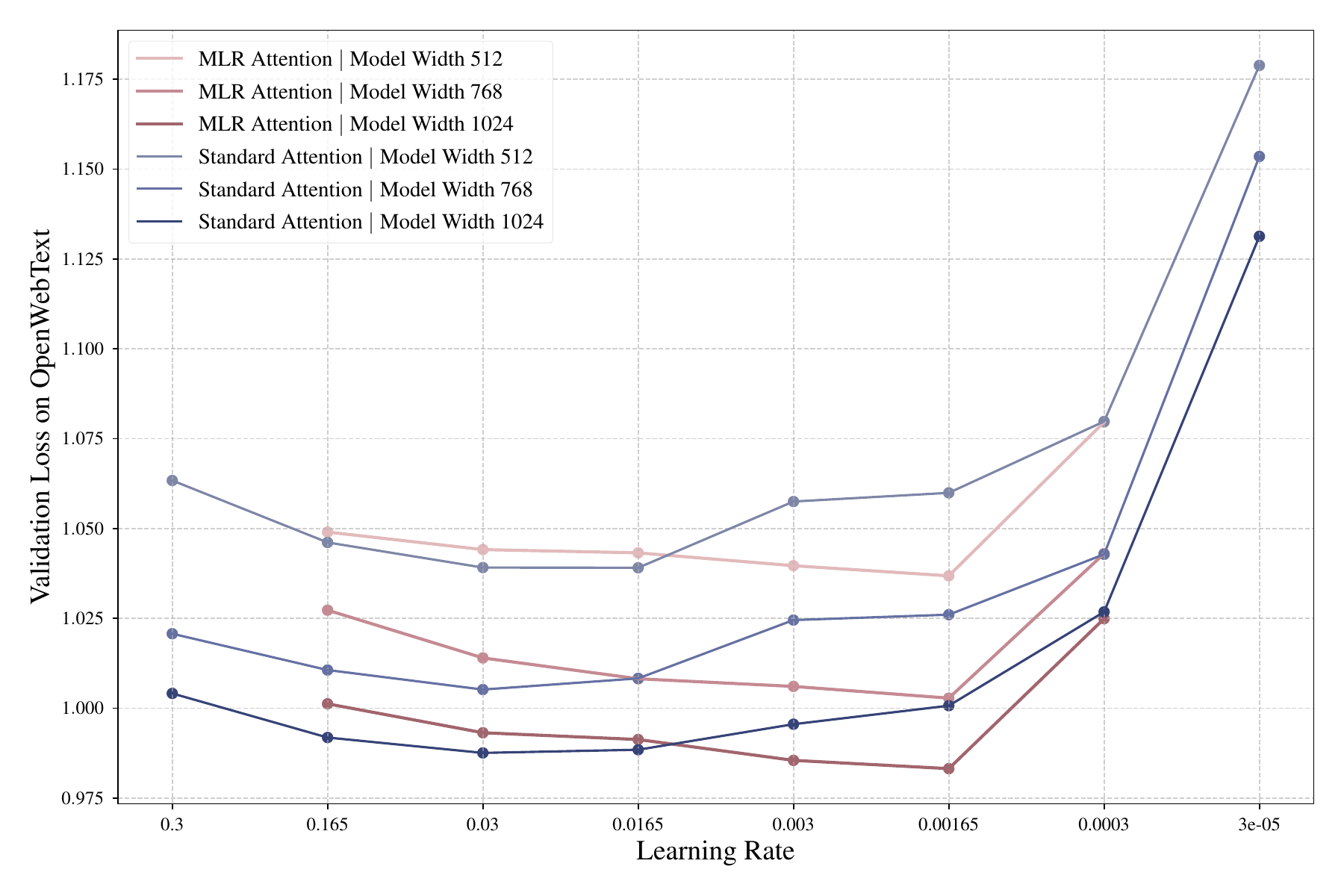}}
\caption{\textbf{ We show the validation loss of an 8-Level MLR attention and standard attention on OpenWebText across a variety of learning rates.} Here we sweep over model widths $D\in\{512, 768, 1024\}$ with a reduced context length of $256$ due to compute constraints. As the figure shows, our MLR attention shares the same optimal learning rate across model width, and it’s also consistently better than standard attention when both are properly tuned.}
\label{fig:appendixmupadditionalfigure}
\end{center}
\vskip -0.2in
\end{figure}

\section{ICL Additional Figures}
In this section we provide additional figures of in-context regression performance as a function of compute and training steps across different input dimensions, model widths, and types of scoring functions shown in \cref{fig:appendixiclfigurendims16,fig:appendixiclfigurendims32,fig:appendixiclfigurendims64,fig:appendixiclfigurendims128,fig:appendixiclresultsfiglotoffigures}. We also provide error bars for \cref{fig:figformharankbottleneck} as shown in \cref{fig:appendixiclerrorbars}. Due to compute budget constraints, we only plotted the error bars when the input dimension is $16$. 

\begin{figure}[H]
\vskip 0.2in
\begin{center}
\centerline{\includegraphics[width=0.6\columnwidth]{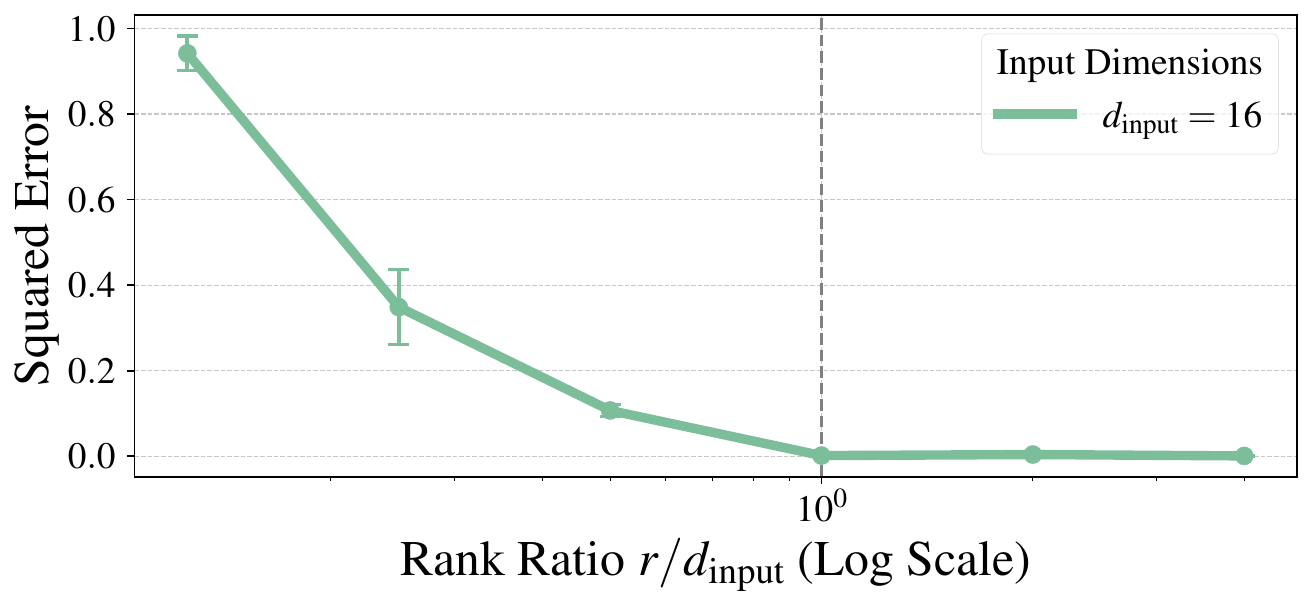}}
\caption{\textbf{The performance of standard multi-head attention as a function of the rank ratio.} We take the runs from \cref{fig:figformharankbottleneck} and plot error bars when the input dimension is $16$, i.e. $d_{\text{input}}=16$.}
\label{fig:appendixiclerrorbars}
\end{center}
\vskip -0.2in
\end{figure}

\begin{figure}[h]
\vskip 0.2in
\begin{center}
\centerline{\includegraphics[width=\columnwidth]{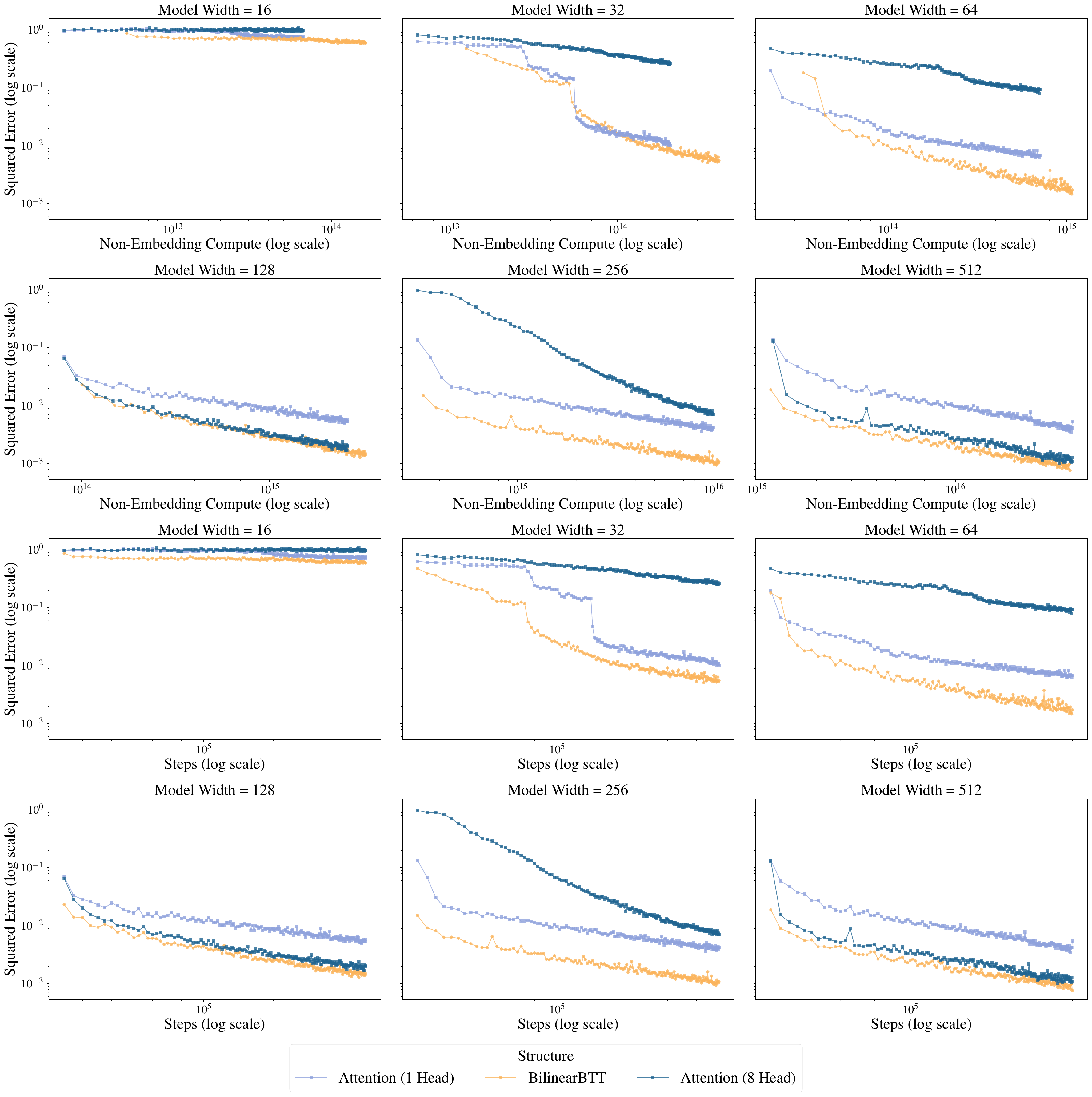}}
\caption{\textbf{We show in-context regression performances as functions of training steps and training compute (excluding the input and output linear projection layers) across model width $D\in\{16, 32, 64, 128, 256, 512\}$ for $d_{\text{input}}=16$.} In general, Bilinear BTT outperforms attention with varying number of heads.}
\label{fig:appendixiclfigurendims16}
\end{center}
\vskip -0.2in
\end{figure}

\begin{figure}[h]
\vskip 0.2in
\begin{center}
\centerline{\includegraphics[width=\columnwidth]{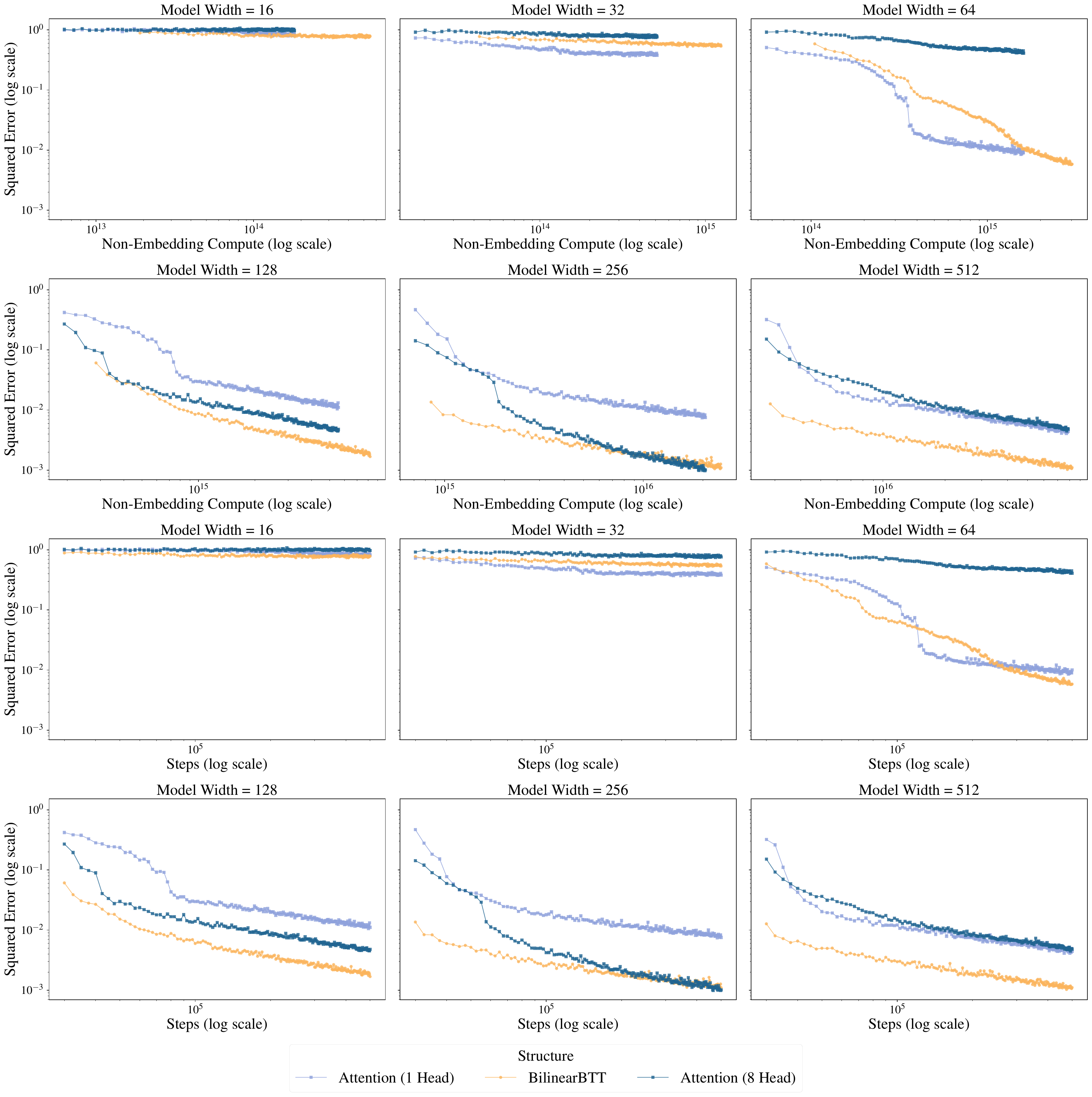}}
\caption{\textbf{We show in-context regression performances as functions of training steps and training compute (excluding the input and output linear projection layers) across model width $D\in\{16, 32, 64, 128, 256, 512\}$ for $d_{\text{input}}=32$.} In general, Bilinear BTT outperforms attention with varying number of heads.}
\label{fig:appendixiclfigurendims32}
\end{center}
\vskip -0.2in
\end{figure}

\begin{figure}[h]
\vskip 0.2in
\begin{center}
\centerline{\includegraphics[width=\columnwidth]{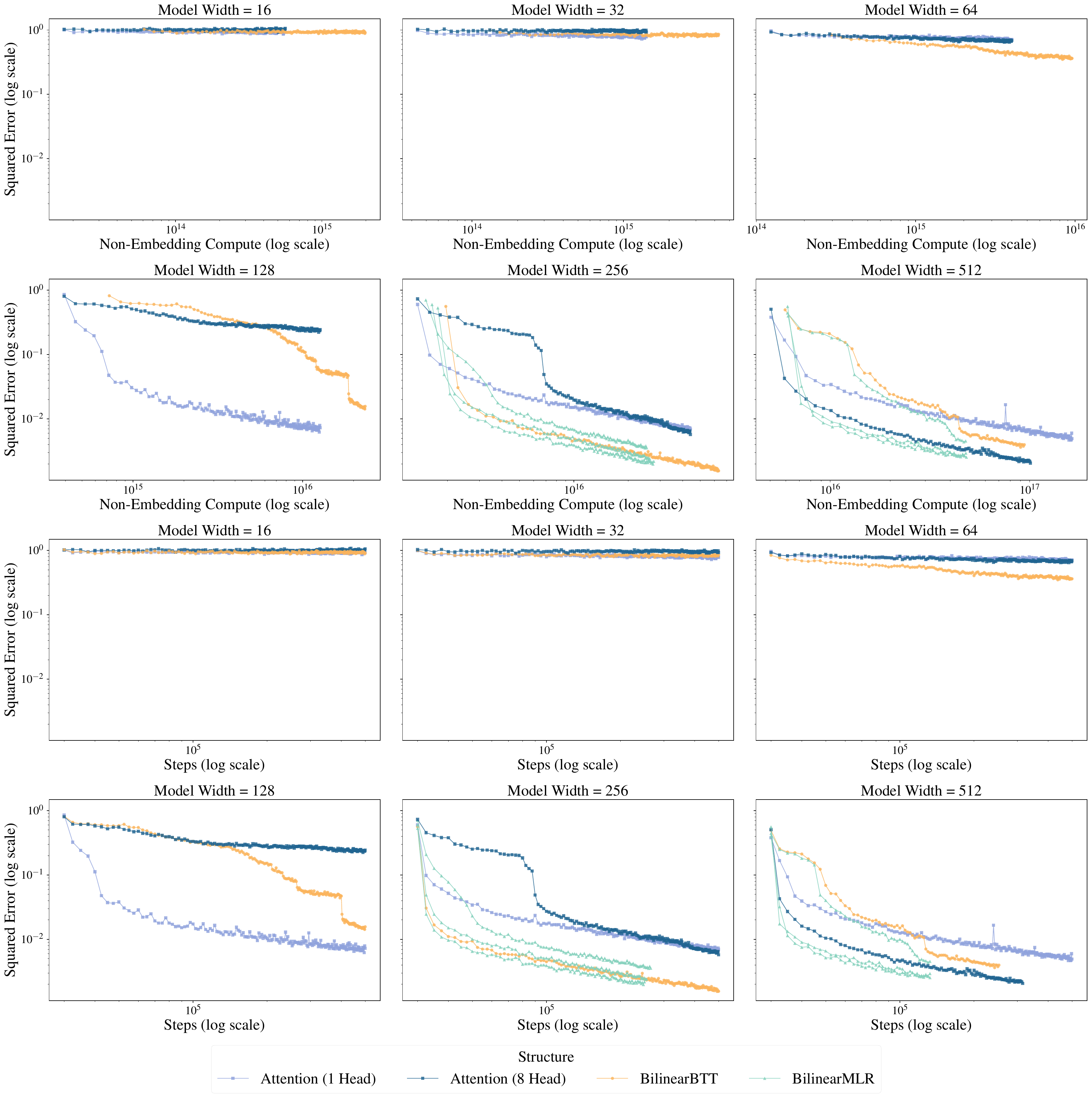}}
\caption{\textbf{We show in-context regression performances as functions of training steps and training compute (excluding the input and output linear projection layers) across model width $D\in\{16, 32, 64, 128, 256, 512\}$ for $d_{\text{input}}=64$.} In general, Bilinear BTT outperforms attention with varying number of heads. Bilinear MLR also outperforms Bilinear BTT when $D=256$ or $512$. The different Bilinear MLR lines correspond to different $8$ levels Bilinear MLR based on $(r_1,\cdots,r_8)$.}
\label{fig:appendixiclfigurendims64}
\end{center}
\vskip -0.2in
\end{figure}

\begin{figure}[h]
\vskip 0.2in
\begin{center}
\centerline{\includegraphics[width=\columnwidth]{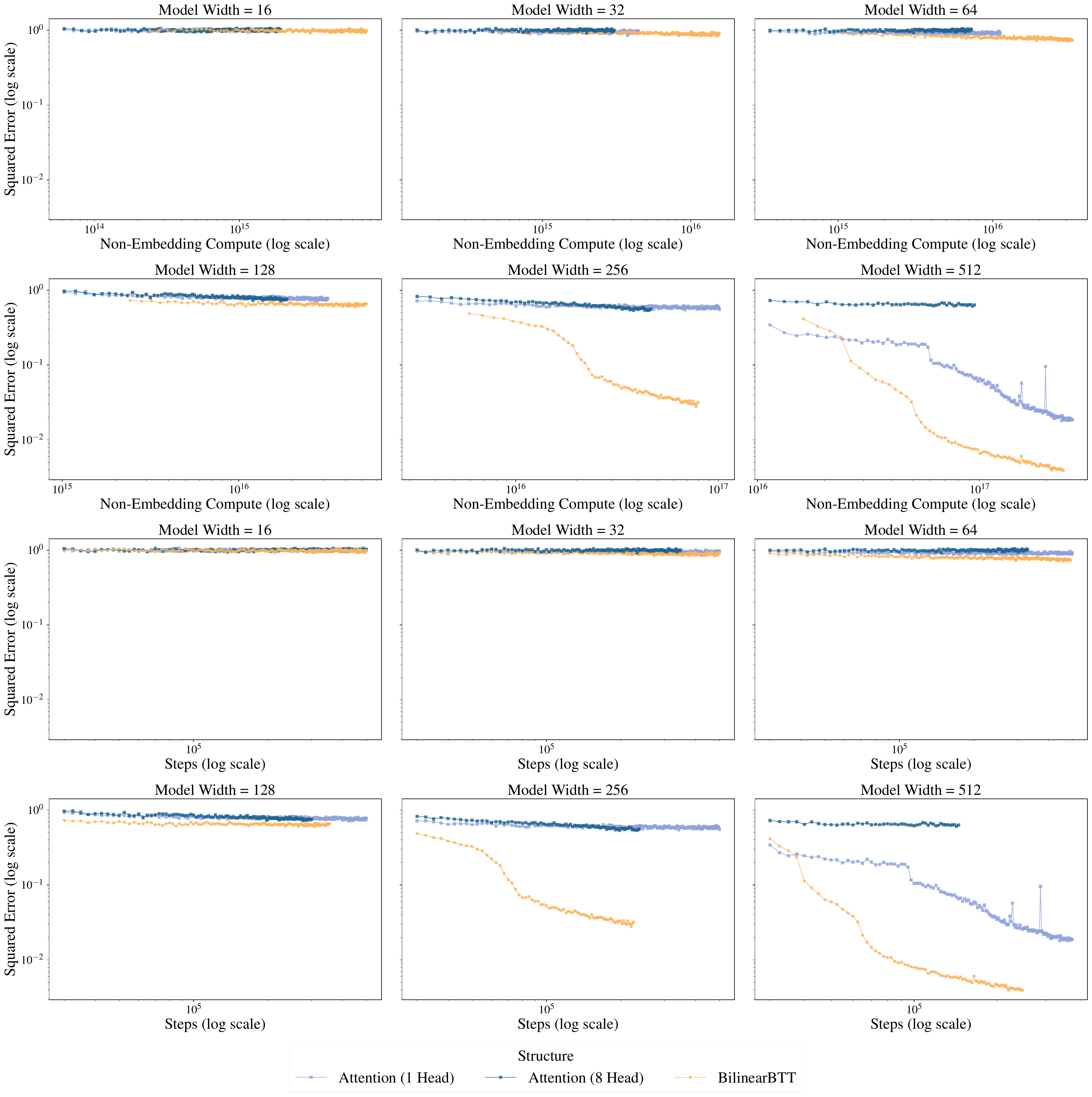}}
\caption{\textbf{We show in-context regression performances as functions of training steps and training compute (excluding the input and output linear projection layers) across model width $D\in\{16, 32, 64, 128, 256, 512\}$ for $d_{\text{input}}=128$.} In general, Bilinear BTT outperforms attention with varying number of heads.}
\label{fig:appendixiclfigurendims128}
\end{center}
\vskip -0.2in
\end{figure}

\begin{figure*}[t!]
    \centering
    \begin{subfigure}[t]{0.4\linewidth}
        \centering
        \includegraphics[width=\linewidth]{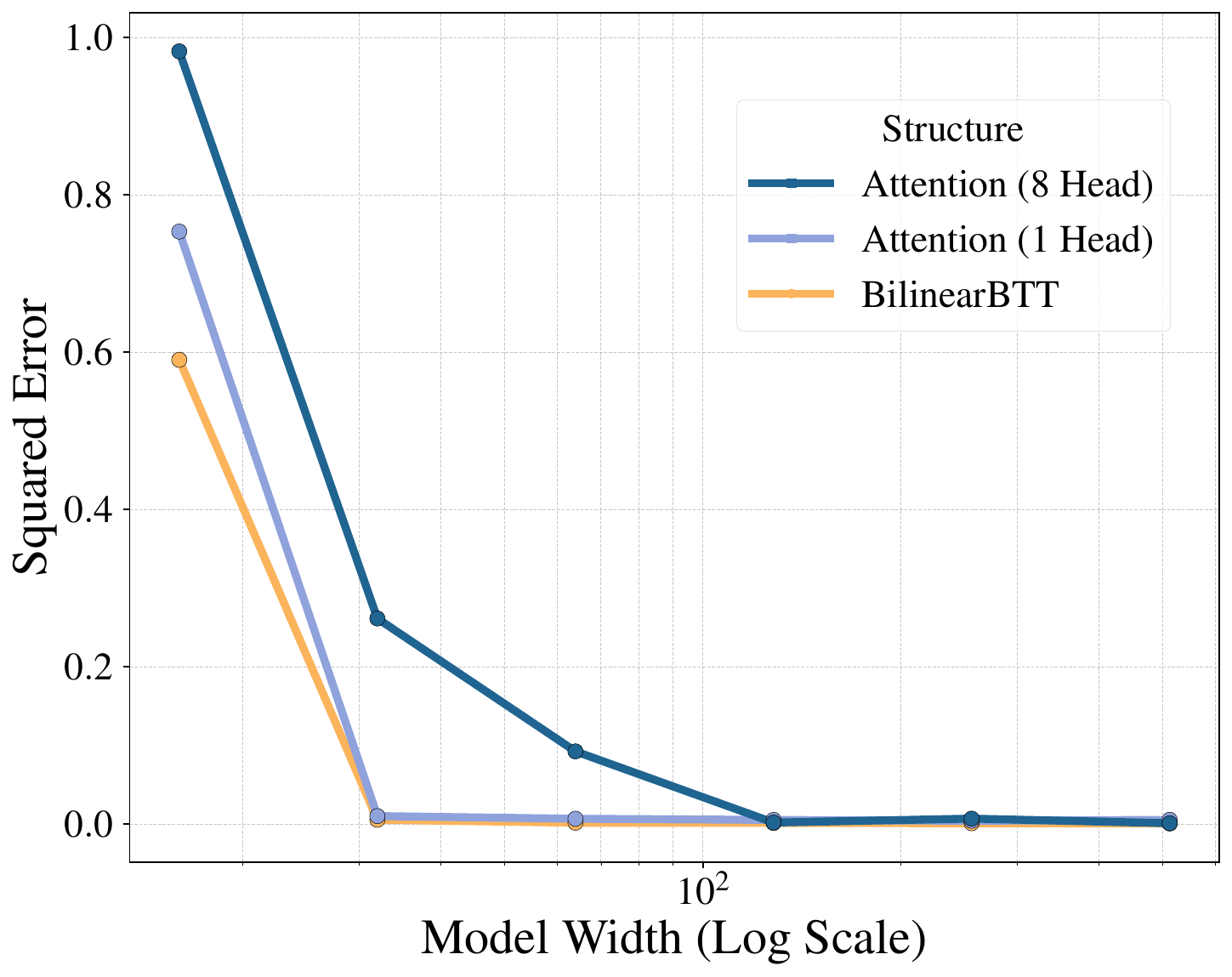}
        \caption{Input Dimension ($d_{\text{input}}$) = 16.}
        \label{fig:appendixiclsubfig1}
    \end{subfigure}
    \hspace{10pt}
    \begin{subfigure}[t]{0.4\linewidth}
        \centering
        \includegraphics[width=\linewidth]{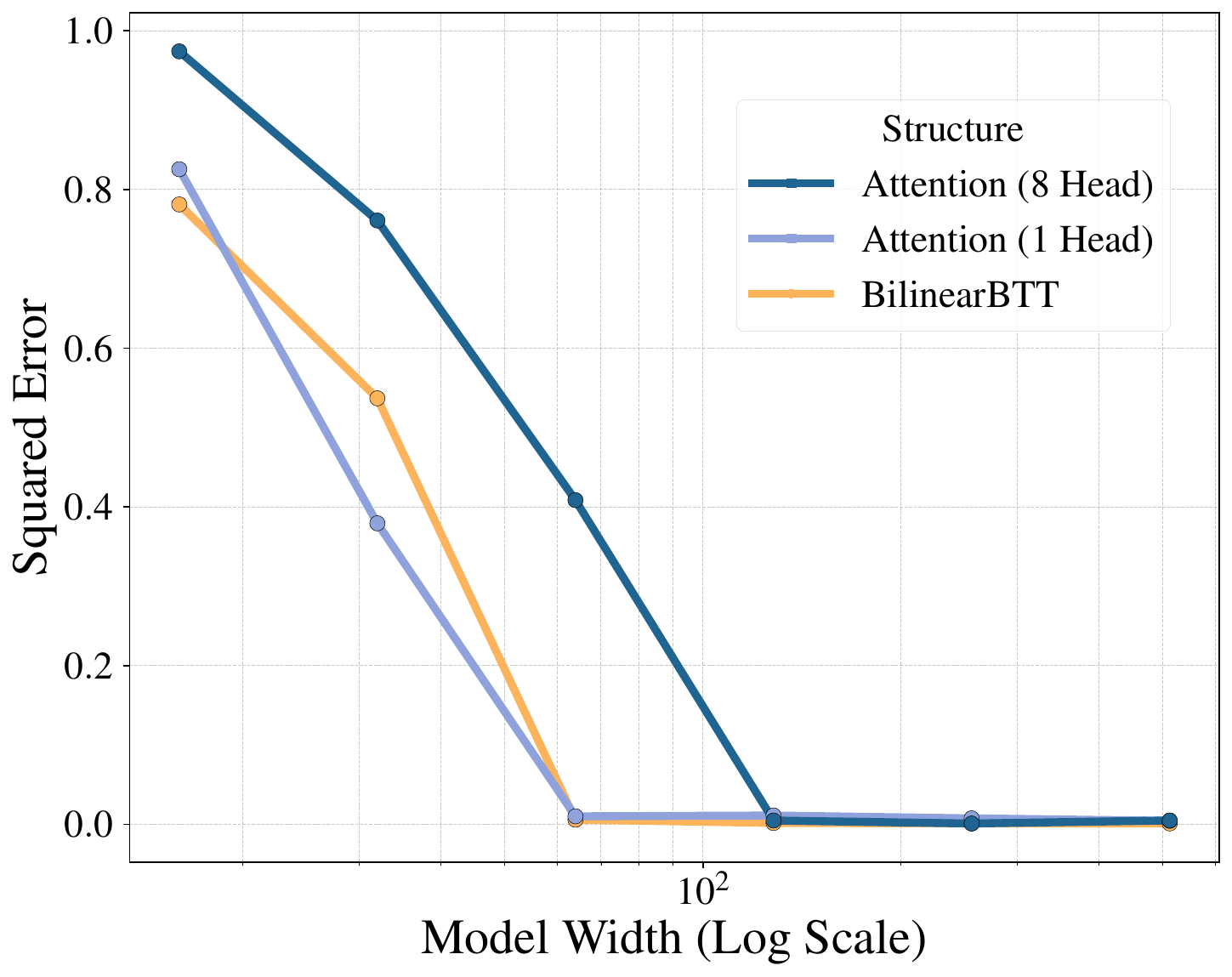}
        \caption{Input Dimension ($d_{\text{input}}$) = 32.}
        \label{fig:appendixiclsubfig2}
    \end{subfigure}

    \vspace{15pt}

    \begin{subfigure}[t]{0.4\linewidth}
        \centering
        \includegraphics[width=\linewidth]{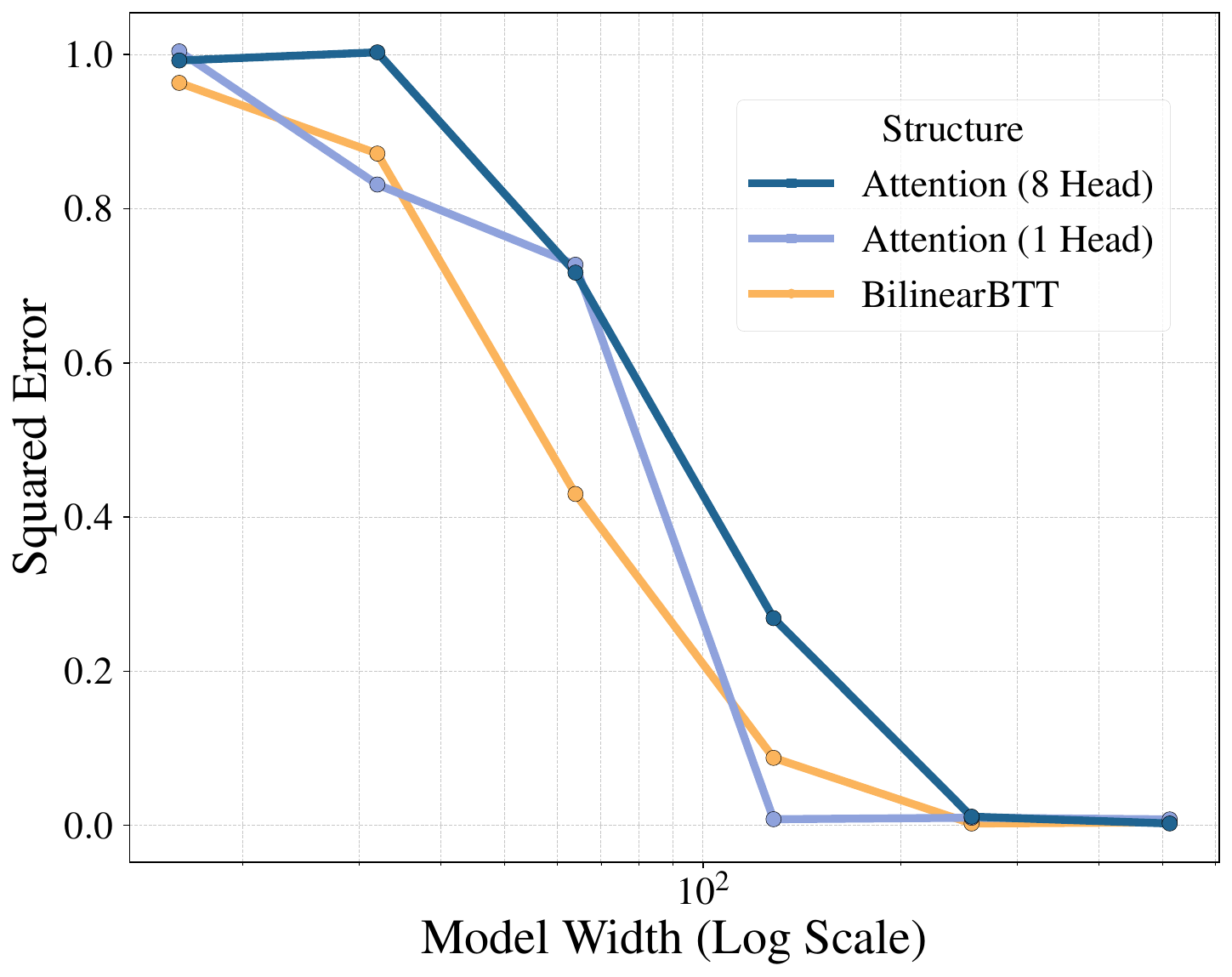}
        \caption{Input Dimension ($d_{\text{input}}$) = 64.}
        \label{fig:appendixiclsubfig3}
    \end{subfigure}
    \hspace{10pt}
    \begin{subfigure}[t]{0.4\linewidth}
        \centering
        \includegraphics[width=\linewidth]{figures/ICL_Regression/final_loss_ndims_128_as_a_function_of_params.pdf}
        \caption{Input Dimension ($d_{\text{input}}$) = 128.}
        \label{fig:appendixiclsubfig4}
    \end{subfigure}

    \caption{\textbf{Bilinear BTT and $1$-head attention are full rank structures and thus achieve lower regression errors (y-axis) compared to $8$-head attention at smaller model width (x-axis).} Each one of the four figures correspond to one particular input dimension $d_{\text{input}}$.}
    \label{fig:appendixiclresultsfiglotoffigures}
\end{figure*}

\begin{figure*}[t!]
    \centering
    \begin{subfigure}[t]{0.4\linewidth}
        \centering
        \includegraphics[width=\linewidth]{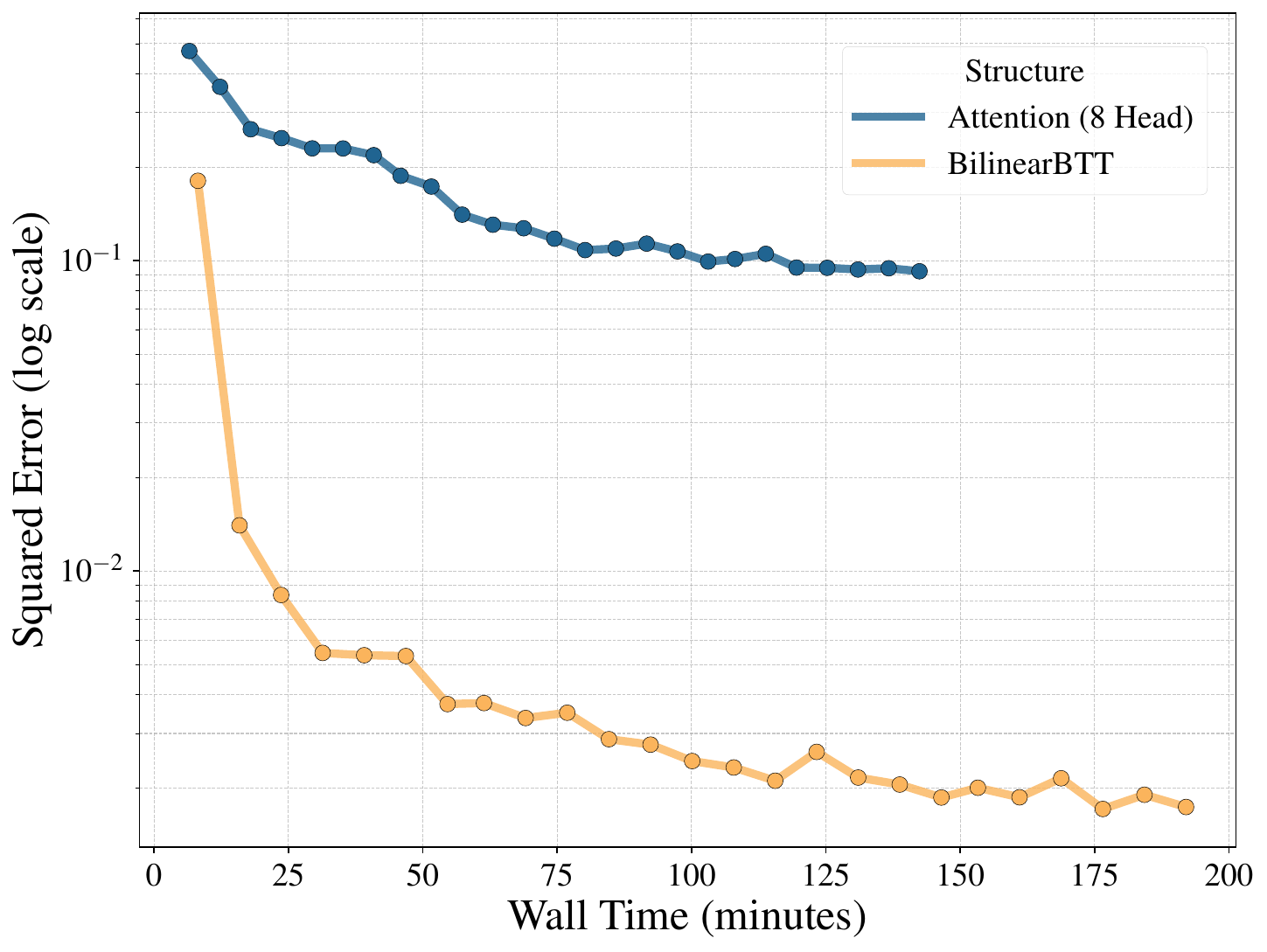}
        \caption{Loss as a function wall time}
        \label{fig:appendixiclwalltimeline}
    \end{subfigure}
    \hspace{10pt}
    \begin{subfigure}[t]{0.4\linewidth}
        \centering
        \includegraphics[width=\linewidth]{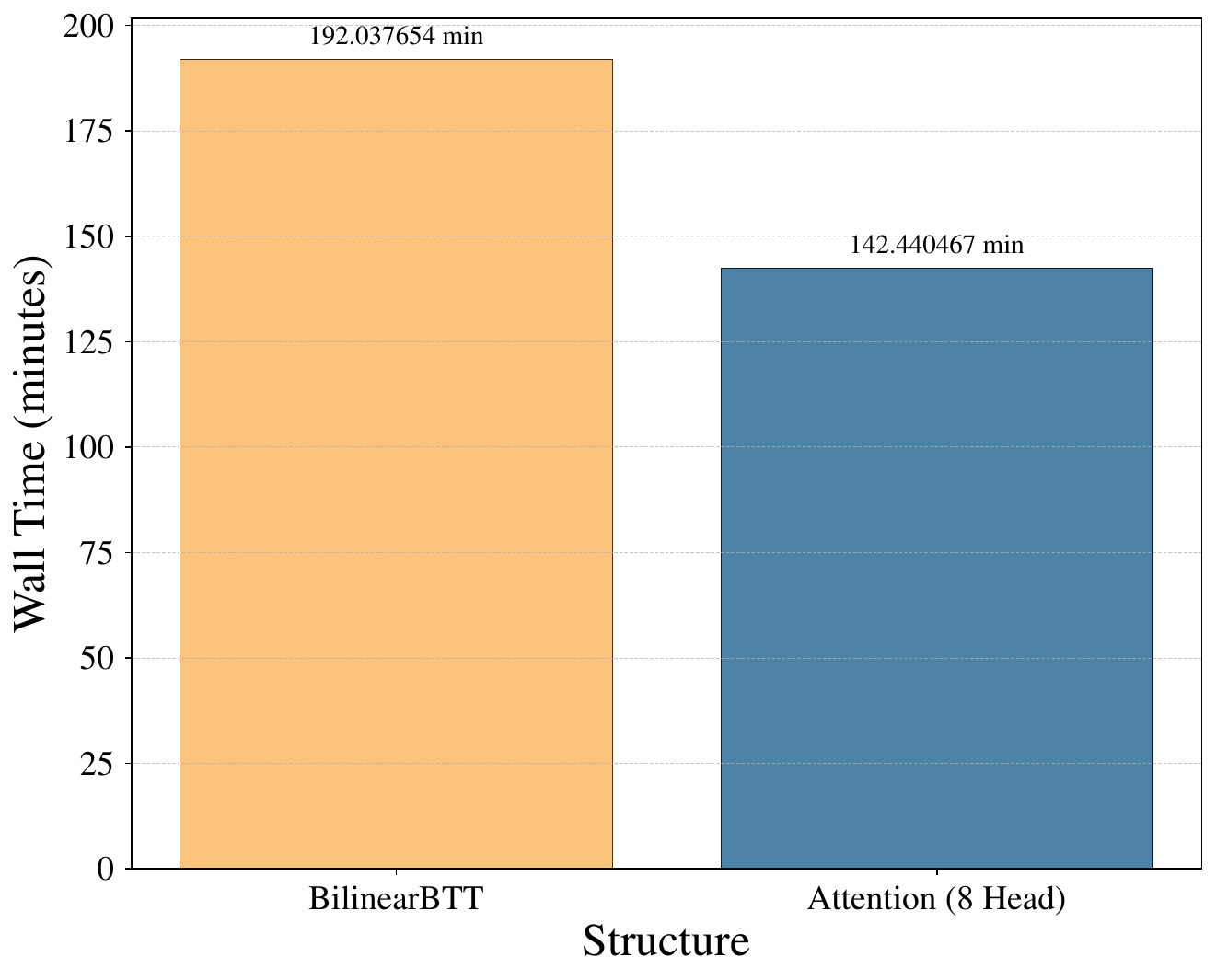}
        \caption{Total wall time for in total $500,000$ training steps}
        \label{fig:appendixiclwalltimebar}
    \end{subfigure}
    \caption{\textbf{Wall time comparisons between bilinear BTT and standard attention for input dimension at $d_{\text{input}}=16$ and model width at $D=64$ and number of attention heads $H=8$. We omit the $1$ head attention since it has wall time very close to $8$ head attention.} (a) we show the squared regression error in log scale (y-axis) against the wall time between Bilinear BTT and standard attention. We observe that Bilinear BTT still outperforms attention when controlling for wall time. (b) We plot the total wall time for Bilinear BTT and standard attention for $500,000$ training steps. Although Bilinear BTT is 1.35x slower, it still has a much better performance compared to standard attention.}
    \label{fig:appendixiclwalltimefigs}
\end{figure*}

\section{Language Modeling Additional Figures}
In this section we present additional figures for language modeling performances between MLR attention and variants of sliding window attention across model widths shown in \cref{fig:appendixmlrattllmperf}.

\begin{figure*}[t!]
    \centering
    \begin{subfigure}[t]{0.4\linewidth}
        \centering
        \includegraphics[width=\linewidth]{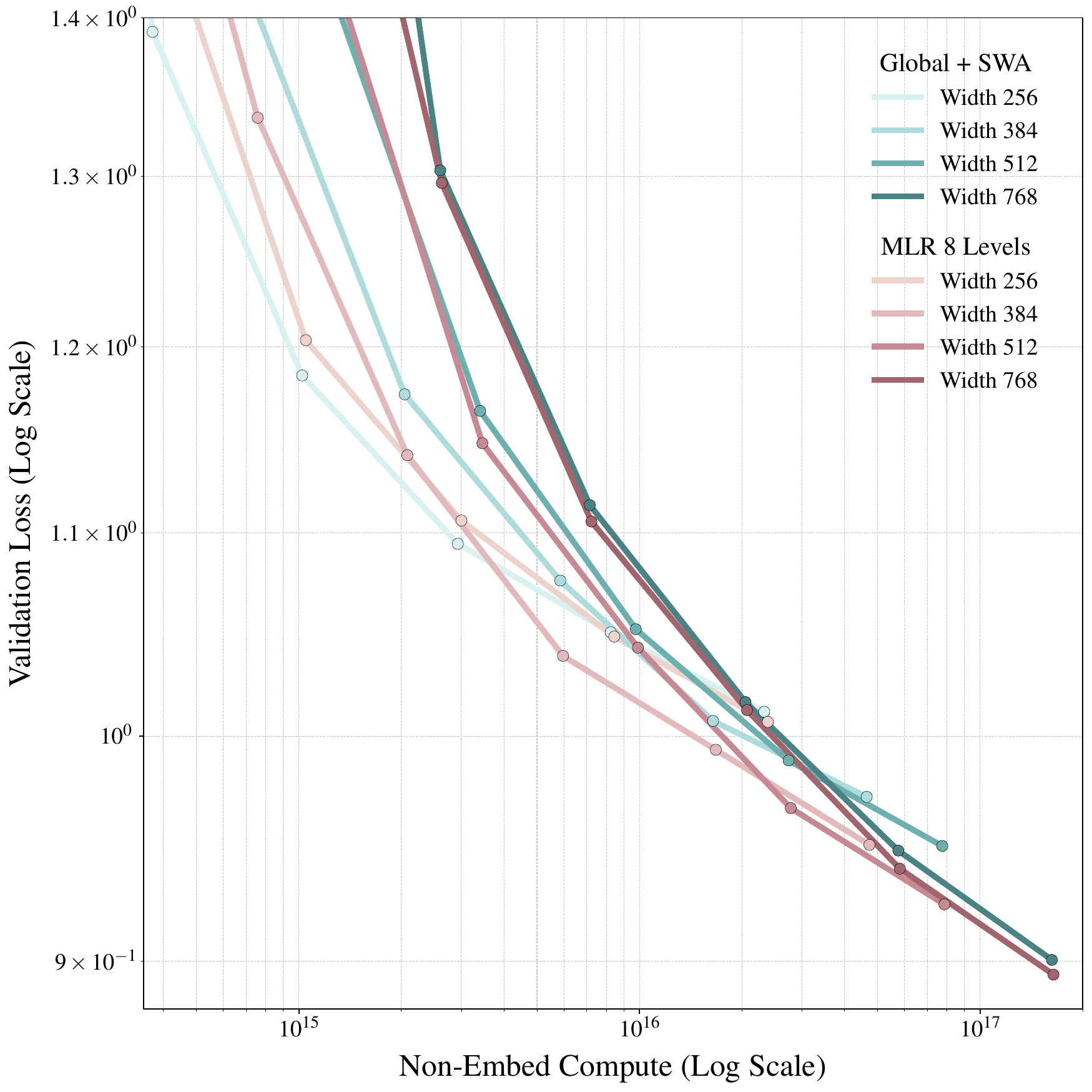}
        \caption{MLR Attention vs. Global + SWA}
        \label{fig:appendixllmadditionalfiguresgswa}
    \end{subfigure}
    \hspace{10pt}
    \begin{subfigure}[t]{0.4\linewidth}
        \centering
        \includegraphics[width=\linewidth]{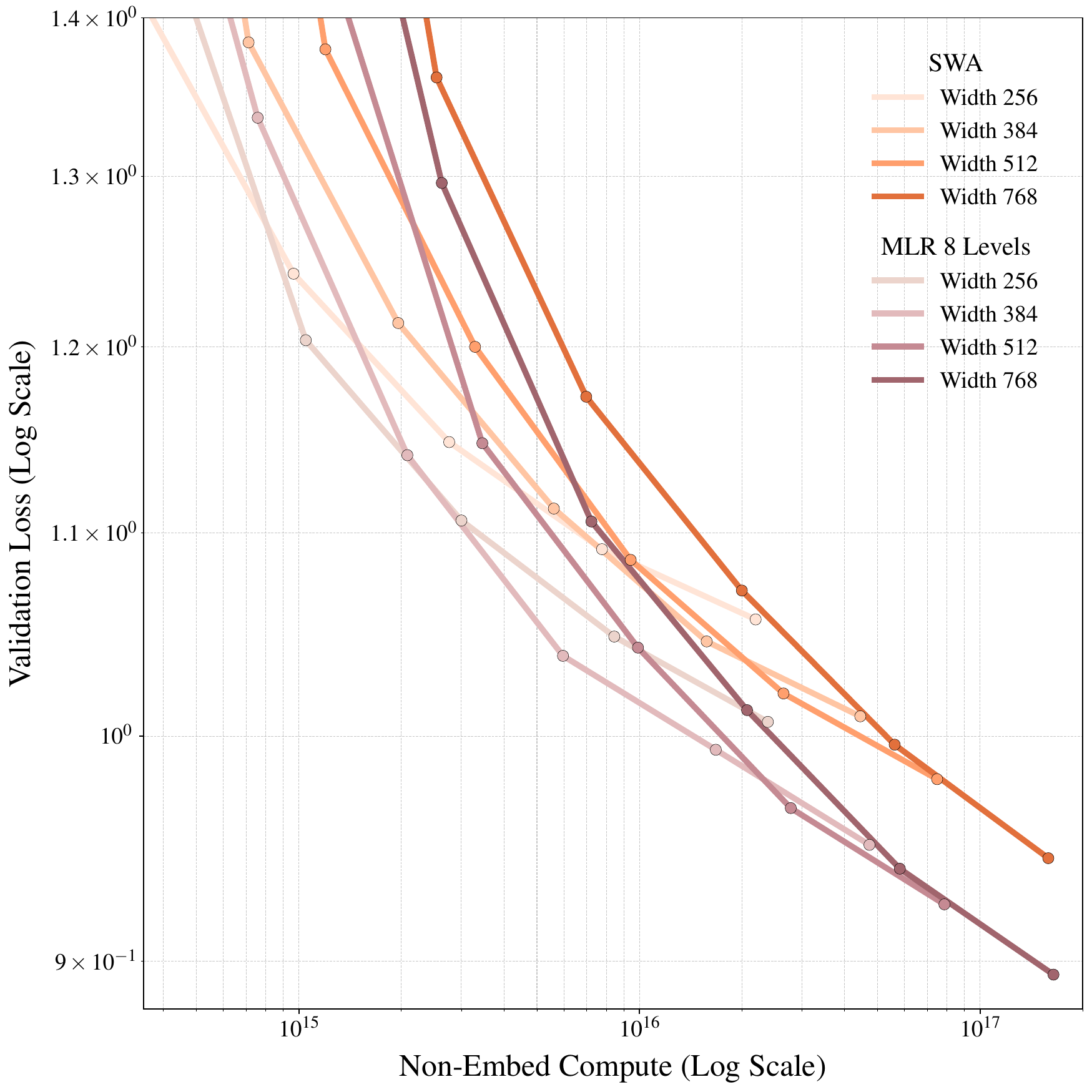}
        \caption{MLR Attention vs. SWA}
        \label{fig:appendixllmadditionalfiguresswa}
    \end{subfigure}
    \caption{\textbf{MLR attention outperforms both purely sliding window attention (SWA) and combinations of sliding window attention and standard attention with global context across model widths.} SWA refers to all layers being sliding window attention. Global + SWA means sequentially stacking standard attention and SWA. }
    \label{fig:appendixmlrattllmperf}
\end{figure*}

\section{Time Series Forecasting Additional Figures}\label{appendix:timeseriessection}

We train a transformer model with $2$ encoder layers, embedding dimension $D=512$, and $8$ attention heads on the ETTh1 subset of the ETT dataset. The ETTh1 dataset records the oil temperature and six power load features collected each hour from different electric power transformer stations \citep{zhou2021informerefficienttransformerlong}.
The dataset contains records with a variety of time horizons. We select horizons of $T\in\{96, 192, 336\}$ hours.
Thus, the time series given to the transformer models has sequence length $T$ with feature dimension $7$. 

The transformer model we used has an input linear projection layer that maps points in $\mathbb{R}^{7}$ to $\mathbb{R}^{D}$ and an output linear projection layer that maps points from $\mathbb{R}^{D}$ to $\mathbb{R}^{7}$.
Following \citet{olivares2022library_neuralforecast}, we use Ray Tune \citep{liaw2018tuneresearchplatformdistributed} to sweep over optimal learning rates in the range of $\{0.00001, 0.0002, 0.005\}$ and SGD iterations in $\{200, 1000\}$.
In \cref{fig:appendixmlratttimeseriesperf}, we plot the relative improvement of MLR attention over standard attention in terms of Mean Absolute Error (MAE) for three different time horizons.
The relative improvement is obtained by subtracting the final MAE value of MLR attention with the final MAE value of standard attention normalized by the final MAE value of standard attention.
We observe that as the sequence length grows, MLR attention outperforms standard attention in oil temperature prediction accuracy.

\begin{figure*}[t!]
    \centering
    \begin{subfigure}[t]{0.4\linewidth}
        \centering
        \includegraphics[width=\linewidth]{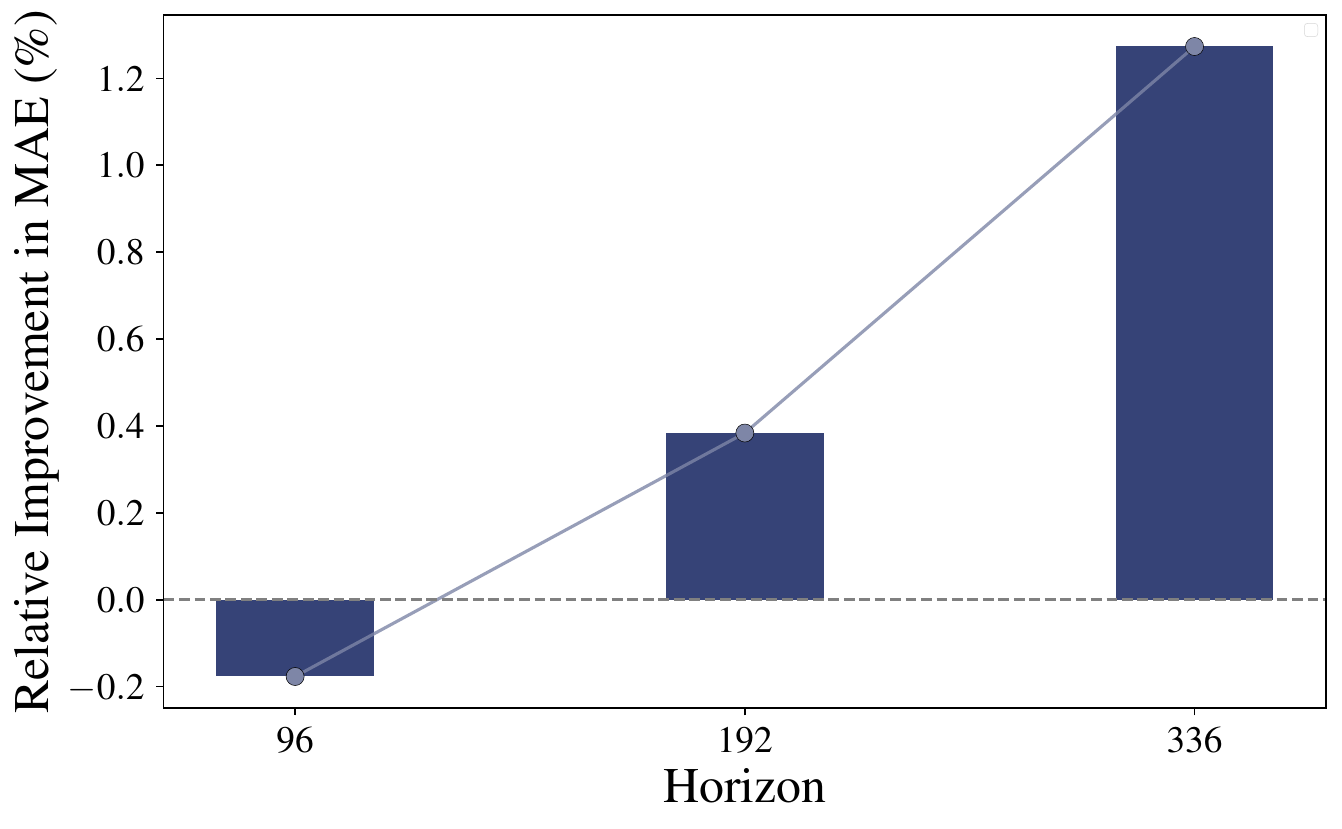}
        \caption{2 Levels MLR Attention on ETTh1}
        \label{fig:appendixtimeseriesadditionalfigures1}
    \end{subfigure}
    \hspace{10pt}
    \begin{subfigure}[t]{0.4\linewidth}
        \centering
        \includegraphics[width=\linewidth]{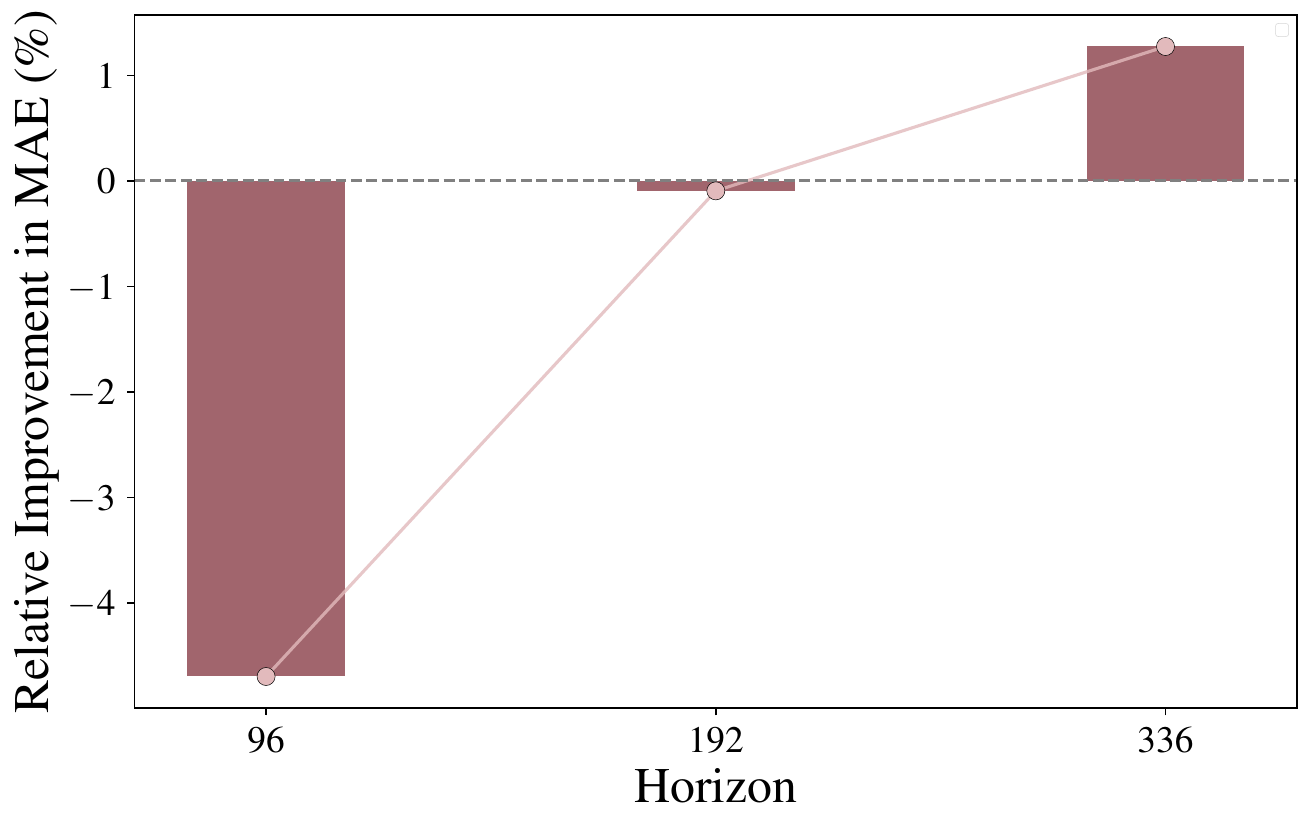}
        \caption{4 Levels MLR Attention on ETTh1}
        \label{fig:appendixtimeseriesadditionalfigures2}
    \end{subfigure}
    \caption{\textbf{As the time horizon (i.e. sequence length) becomes larger, MLR attention outperforms standard attention in oil temperature prediction accuracy by around $1\%$.} (a) We show the relative improvement in Mean Absolute Error (MAE) of a $2$-levels MLR attention with rank distributions $48|16$. For both horizons $96$ and $336$, MLR attention achieves better oil temperature forecasting accuracy. (b) We show that a $4$-levels MLR attention with rank distributions $40|16|4|4$ outperforms standard attention in relative MAE as the sequence length becomes larger. Across the horizon, we observe a positive correlation between relative MAE and the time horizon of the oil temperature data in the ETTh1 datasets. See \cref{appendix:timeseriessection} for additional details.}
    \label{fig:appendixmlratttimeseriesperf}
\end{figure*}

\end{document}